\theoremstyle{plain}
\newtheorem{theorem}{Theorem}
\theoremstyle{definition}
\newtheorem{assumption}{Assumption}
\theoremstyle{remark}
\newtheorem{remark}{Remark}
\crefname{assumption}{assumption}{assumptions}
\Crefname{assumption}{Assumption}{Assumptions}
\newtheorem{lem}{Lemma}
\crefname{lem}{lemma}{lemmas}
\Crefname{lem}{Lemma}{Lemmas}
\crefname{table}{table}{tables}
\Crefname{table}{Table}{Tables}
\title{Multi-point Feedback of Bandit Convex Optimization with Hard Constraints}
\author{%
    Yasunari Hikima\\
    Artificial Intelligence Laboratory, Fujitsu Limited, Japan \\
    \texttt{hikima.yasunari@fujitsu.com}
}
\begin{document}

\maketitle

\begin{abstract}%
    This paper studies bandit convex optimization with constraints, where the learner aims to generate a sequence of decisions under partial information of loss functions such that the cumulative loss is reduced as well as the cumulative constraint violation is simultaneously reduced.
    We adopt the cumulative \textit{hard} constraint violation as the metric of constraint violation, which is defined by $\sum_{t=1}^{T} \max\{g_t(\bmx_t), 0\}$.
    Owing to the maximum operator, a strictly feasible solution cannot cancel out the effects of violated constraints compared to the conventional metric known as \textit{long-term} constraints violation.
    We present a penalty-based proximal gradient descent method that attains a sub-linear growth of both regret and cumulative hard constraint violation, in which the gradient is estimated with a two-point function evaluation.
    Precisely, our algorithm attains $O(d^2T^{\max\{c,1-c\}})$ regret bounds and $O(d^2T^{1-\frac{c}{2}})$ cumulative hard constraint violation bounds for convex loss functions and time-varying constraints, where $d$ is the dimensionality of the feasible region and $c\in[\frac{1}{2}, 1)$ is a user-determined parameter.
    We also extend the result for the case where the loss functions are strongly convex and show that both regret and constraint violation bounds can be further reduced.
\end{abstract}

\section{Introduction}
Bandit Convex Optimization (BCO) is a fundamental framework of sequential decision-making under uncertain environments and with limited feedback, which can be regarded as a structured repeated game between a learner and an environment \citep{hazan2016introduction,lattimore2020bandit}.
In this framework, a learner is given a convex feasible region $\cX\subseteq\Rd$ and the total number $T$ of rounds.
At each round, $t=1,2,\dots, T$, the learner makes decision $\bmx_t\in\cX$, and then a convex loss function $f_t:\cX\to\R$ is revealed. 
The learner cannot access the loss function $f_t$, but only the \textit{bandit} feedback is available, i.e., the learner can only observe the value of the loss at the point she committed to, i.e., $f_t(\bmx_t)$. 
The objective of the learner is to generate a sequence of decisions $\{\bmx_t\}_{t=1}^{T}\subseteq\cX$ that minimizes cumulative loss $\sum_{t=1}^{T} f_t(\bmx_t)$ under bandit feedback.
The performance of the learner is evaluated in terms of \textit{regret}, which is defined by
\begin{equation*}
    R_T \coloneqq \sum_{t=1}^T f_t(\bmx_t) - \min_{\bmx\in\cX}\sum_{t=1}^{T} f_t(\bmx).
\end{equation*}
This regret measures the difference between the cumulative loss of the learner's strategy and the minimum possible cumulative loss where the sequence of loss functions $\{f_t(\bmx)\}_{t=1}^{T}$ had been known in advance and the learner could choose the best fixed optimal decision in hindsight.

In many real-world scenarios, the decisions are often subject to some constraints such as budget or resources. 
In the context of Online Convex Optimization (OCO), where the learner has access to the complete information about the loss functions, a projection operator is typically applied in each round so that the decisions belong to constraints \citep{zinkevich2003online,hazan2016introduction}.
However, such a projection step is typically a computational bottleneck when the feasible region is complex.

To address the issue of the projection step, \citet{mahdavi2012trading} considers \textit{online convex optimization with long-term constraints}, where the learner aims to generate a sequence of decisions that the decisions satisfy constraints in the long run, instead of requiring to satisfy the constraints in all rounds. 
They introduce the cumulative \textit{soft} constraint violation metric defined by $V^{\text{soft}}_T\coloneqq\sum_{t=1}^{T}g_t(\bmx_t)$, where $g_t(\bmx)\leq 0$ is the functional constraint to be satisfied. Later, \citet{yuan2018online} consideres strict notion of constraint violation reffered to as cumulative \textit{hard} constraint violation, which is defined by $V^{\text{hard}}_T\coloneqq\sum_{t=1}^{T} \max\{g_t(\bmx_t), 0\}$. This metric overcomes the drawback of cumulative soft constraint violation, and it is suitable for safety-critical systems, in which the failure of constraint violation may result in catastrophic consequences.

To see that the notion of cumulative hard constraint violation is a stronger metric, let us consider the example discussed in \citet{guo2023rectified}. 
Given a sequence of decisions whose constraint functions are $\{g_t(\bmx_t)\}_{t=1}^{T}$ with $T=1000$ such that $g_t(\bmx_t) = -1$ if $t$ is odd; otherwise $g_t(\bmx_t) = 1$, we have $\sum_{t=1}^{\tau} g_t(\bmx_t) \leq 0$ for any $\tau\in\{1,2,\dots,T\}$, however, the constraint $g_t(\bmx)\leq 0$ is violated at half of rounds. 
On the other hand, the notion of hard constraint violation can capture the constraint violation since we have $V^{\text{hard}}_T = 500$.
Thus, the conventional definition of cumulative soft constraint violation $V_T^{\text{soft}}$ cannot accurately measure the constraint violation but cumulative hard constraint violation $V_T^{\text{hard}}$ can.

Many existing algorithms for BCO with constraints proposed in prior works typically involve projection operators as well as algorithms for OCO with constraints \citep{agarwal2010optimal,zhao2021bandit}, and are generally limited to the simple convex set. \citet{chen2019projection,garber2020improved} consider a projection-free algorithm for BCO, but the constraint violation bound has not been reported. Some studies have extended the algorithm for OCO with soft constraints to the bandit setting \citep{mahdavi2012trading,cao2018online}, however, these algorithms cannot be directly extended to BCO with hard constraints. In other words, there has been no algorithm that can simultaneously achieve sub-linear bound both regret and cumulative hard constraints violation.

The present study focuses on the particular case of multi-point feedback of BCO with constraints, in which the loss functions are convex or strongly convex, and constraint violation is evaluated in terms of hard constraints. 
This kind of problem widely appears in real-world scenarios such as portfolio management problems, in which the manager has concrete constraints to be satisfied but only has access to the loss function $f_t(\cdot)$ at several points close to the decision $\bmx_t$.
We present a penalty-based proximal gradient descent method which attains both $O(d^2T^{\max\{c,1-c\}})$ regret bound and $O(d^2T^{1-\frac{c}{2}})$ cumulative hard constraint violation bound, where $d$ is the dimensionality of the feasible region and $c\in[\frac{1}{2}, 1)$ is a user-determined parameter. 
Our proposed algorithm is inspired by a gradient estimation in the BCO literature \citep{flaxman2004online,agarwal2010optimal} and an algorithm for OCO with hard constraints \citep{guo2022online}.

\subsection{Related work}

For OCO with constraints, a projection operator is generally applied to the updated variables to enforce them feasible at each round \citep{zinkevich2003online,duchi2010composite}.
However, such projection is typically inefficient to implement due to the high computational effort especially when the feasible region $\cX$ is complex (e.g., $\cX$ is characterized by multiple inequalities), and efficient projection computation is limited to simple sets such as $\ell_1$-ball or probability simplex \citep{duchi2008efficient}.

Instead of requiring that the decisions belong to the feasible region in all rounds, \citet{mahdavi2012trading} first considers relaxing the notion of constraints by allowing them to be violated at some rounds but requiring them to be satisfied in the long run.
This type of OCO is referred to as \textit{online convex optimization with long-term constraints}, and the performance metric for constraint violation is defined by the cumulative violation of the decisions from the constraints for all rounds, i.e., $V^{\text{soft}}_T\coloneqq \sum_{t=1}^{T} g_t(\bmx_t)$ referred to as \textit{soft} constraints.
\citet{mahdavi2012trading} proposes a primal-dual gradient-based algorithm that attains $O(\sqrt{T})$ regret bound and $O(T^{\frac{3}{4}})$ constraint violations and subsequent researches have been conducted to improve both bounds.
\citet{jenatton2016adaptive} extends the algorithm to achieve $O(T^{\max\{c,1-c\}})$ regret bound and $O(T^{1-\frac{c}{2}})$ constraint violation, where $c\in(0,1)$ is a user-determined parameter.
\citet{yu2020low} proposes the drift-plus-penalty based algorithm developed for stochastic optimization in dynamic queue networks \citep{neely2022stochastic}, and prove the algorithm attains $O(\sqrt{T})$ regret bound and $O(1)$ constraint violation bound.

\citet{yuan2018online} proposes the more strict notion of a constraint violation, which is defined by $V^{\text{hard}}_T\coloneqq \sum_{t=1}^{T} \max\{g_t(\bmx_t),0\}$, so as not to cancel out the effect of violated constraints by the strict feasible solution.
Such paradigm is later referred to as \textit{online convex optimization with hard constraints} \citep{guo2022online}.
In \citet{yuan2018online}, an algorithm that attains $O(T^{\max\{c,1-c\}})$ regret bound and $O(T^{1-\frac{c}{2}})$ constraint violation bound has proposed.
\citet{yi2021regret} extends the algorithm that attains $O(T^{\max\{c,1-c\}})$ regret bound and $O(T^{\frac{1-c}{2}})$ constraint violation bound, and \citet{yi2021regret} also consideres the general dynamic regret bound.
\citet{guo2022online} proposes an algorithm that rectifies updated variables and penalty variables and proves the algorithm attains $O(\sqrt{T})$ regret bound and $O(T^{\frac{3}{4}})$ constraint violation for convex loss functions.

In the partial information setting, a learner is limited to accessing the loss functions and thus the learner cannot construct an algorithm by using a gradient of loss functions.
\citet{flaxman2004online} considers a one-point feedback model, where only one-point function value is available, and constructed an unbiased estimator of the gradient of the loss functions.
By employing the gradient estimator, they applied online gradient descent algorithm \citep{zinkevich2003online} and proved the algorithm attains $O(d^{\frac{2}{3}}T^\frac{2}{3})$ regret bound.
Another variant of the feedback model is multi-point feedback, where the learner is allowed to query multiple points of function in each round.
\citet{agarwal2010optimal} and \citet{nesterov2017random} consideres two-point feedback model and establishes an $O(d^2\sqrt{T})$ regret bound for convex loss functions.

\begin{table}[t]
\small
\centering
\caption{Regret bound and cumulative constraint violation bound for bandit convex optimization with constraints. The column of ``Metric" stands for the metric of constraint violation.}
\begin{tabular}{c|cccccc}\hline
Reference & Bandit & Metric & Loss & Regret  & Violation  \\ \hline\hline
\citet{flaxman2004online} & $\checkmark$ & --- & convex & $O(d^{\frac{2}{3}}T^{\frac{2}{3}})$ & --- \\ \hline
\multirow{2}{*}{\citet{agarwal2010optimal}} & $\checkmark$ & --- & convex & $O(d^2\sqrt{T})$ & --- \\  
& $\checkmark$ & --- & str.-convex & $O(d^2\log T)$ & ---  \\ \hline
\citet{mahdavi2012trading} & $\checkmark$ & soft & convex & $O(\sqrt{T})$ & $O(T^{\frac{3}{4}})$  \\ \hline
\multirow{2}{*}{\citet{guo2022online}} & & \multirow{2}{*}{hard} & convex & $O(\sqrt{T})$ & $O(T^{\frac{3}{4}})$  \\  
& & & str.-convex & $O(\log T)$ & $O(\sqrt{T(1+\log T)})$  \\ \hline
\cellcolor{black!20} & $\checkmark$ & & convex & \cellcolor{black!20}$O(d^2T^{\max\{c,1-c\}})$ & \cellcolor{black!20}$O(d^2T^{1-\frac{c}{2}})$ \\
\multirow{-2}{*}{\cellcolor{black!20}{This work}} & $\checkmark$ & \multirow{-2}{*}{hard} & str.-convex & \cellcolor{black!20}$O(d^2\log T)$ & \cellcolor{black!20}$O(d^2\sqrt{T(1+\log T)})$ \\ \hline
\end{tabular}
\label{table:comp}
\end{table}

\subsection{Contribution}
This paper focuses on the multi-point feedback BCO with constraints, in which the constraint violation is evaluated in terms of cumulative hard constraint violation.
We propose an algorithm (\Cref{alg:reRECOO}) for the BCO and show that the proposed algorithm attains an $O(d^2T^{\max\{c,1-c\}})$ regret bound and an $O(d^2T^{1-\frac{c}{2}})$ cumulative hard constraint violation bound, where $c\in[\frac{1}{2}, 1)$ is a user-defined parameter (\Cref{thm:main} and \Cref{thm:adversarial_constraint}).
By setting $c=\frac{1}{2}$, the algorithm attains $O(d^2\sqrt{T})$ regret bound and $O(d^2T^{\frac{3}{4}})$ constraint violation bound, which is compatible with the prior work for constrained online convex optimization with full-information \citep{yi2022regret,guo2022online}.
We also show both regret and constraint violation bounds are reduced to an $O(d^2\log T)$ and $O(d^2\sqrt{T(1+\log T)})$, respectively, when the loss functions are strongly convex (\Cref{thm:strongly_convex} and \Cref{thm:strongly_constraints}).
The comparison of this study with prior works is summarized in \Cref{table:comp}.

\subsection{Organization}
The rest of this paper is organized as follows.
In \Cref{sec:problem_formulation}, we introduce necessary preliminaries of BCO with constraints.
\Cref{sec:algorithm} presents the proposed algorithm to solve the BCO with constraints under two-point bandit feedback.
In \Cref{sec:theoretical_analysis}, we provide a theoretical analysis of regret bound and hard constraint violation bound for both convex and strongly convex loss functions.
Finally, \Cref{sec:conclusion} concludes the present paper and addresses future work.

\section{Preliminaries}\label{sec:problem_formulation}

\subsection{Notation}
For a vector $\bmx=(x_1,x_2,\dots,x_d)^\top\in\Rd$, let $\norm{\bmx}_2$ be the $\ell_2$-norm of $\bmx$, i.e., $\norm{x}_2=\sqrt{\bmx^\top\bmx}=\sqrt{\sum_{i=1}^{d}x_i^2}$.
Let $\left<\bmx,\bmy\right>$ be the inner product of two vectors $\bmx$ and $\bmy$.
Let $\mathbb{B}^d$ and $\mathbb{S}^d$ denote the $d$-dimensional Euclidean ball and unit sphere, and let $\bmv\in\mathbb{B}^d$ and $\bmu\in\mathbb{S}^d$ denote the random variables sampled uniformly from $\mathbb{B}^d$ and $\mathbb{S}^d$, respectively.
For a scalar $z\in\R$, we denote $[z]_+\coloneqq\max\{z,0\}$.
For a Lipschitz continuous function $f:\Rd\to\R$, let $\lip(f)>0$ be the Lipschitz constant of $f$.
We use $[T]$ as a shorthand for the set of positive integers $\{1,2,\dots,T\}$.
Finally, we use the notation $\mathbb{E}_t$ as the conditional expectation over the condition of all randomness in the first $t-1$ rounds.

\subsection{Assumptions}
Following prior works of constrained OCO \citep{mahdavi2012trading,guo2022online}, we make the following standard assumptions on feasible region, loss functions, and constraint functions.
\begin{assumption}[Bounded domain]\label{ass:X}
The feasible region $\cX\subseteq\R^d$ is a non-empty bounded closed convex set such that $\norm{\bmx - \bmy}_2 \leq D$ holds for any $\bmx,\,\bmy\in\cX$.
\end{assumption}
\begin{assumption}[Convexity and Lipschitz continuity of loss functions]\label{ass:ft}
    The loss function $f_t:\mathcal{X}\to\mathbb{R}$ is convex and Lipschitz continuous with Lipschitz constant $F_t>0$ on $\cX$, that is, we have
    \begin{align*}
        \abs{f_t(\bmx) - f_t(\bmy)} \leq F_t\norm{\bmx - \bmy}_2, 
    \end{align*}
    for any $\bmx,\bmy\in\cX$ and for any $t\in[T]$.
    For simplicity, we define $F:=\max_{t\in[T]}F_t$.
\end{assumption}
\begin{assumption}[Convexity and Lipschitz continuity of constraint functions]\label{ass:gt}
    The constraint function $g_t:\mathcal{X}\to\mathbb{R}$ is convex and Lipschitz continuous with Lipschitz constant $G_t>0$ on $\cX$, that is, we have
    \begin{align*}
        \abs{g_t(\bmx) - g_t(\bmy)} \leq G_t\norm{\bmx - \bmy}_2,
    \end{align*}
    for any  $\bmx,\bmy\in\cX$ and for any $t\in[T]$.
    For simplicity, we define $G:=\max_{t\in[T]}G_t$. 
\end{assumption}
\subsection{Offline constrained OCO}
With the full knowledge of loss functions $\{f_t(\bmx)\}_{t=1}^{T}$ and constraint functions $\{g_t(\bmx)\}_{t=1}^{T}$ in all rounds, the offline constrained OCO is formulated as the following convex optimization problem:
\begin{subequations}\label{opt:COCO}
    \begin{align}
        \min_{\bmx\in\cX}\quad&\sum_{t=1}^{T} f_t(\bmx) \\
        \text{subject to}\quad& g_t(\bmx) \leq 0 \qquad \forall t\in[T],
    \end{align}
\end{subequations}
where $\cX$ is assumed to be a simple convex set (e.g., Euclidean ball, probability simplex) for which the projection onto $\cX$ is efficiently computable.

For the sake of simplicity of theoretical analysis, the present paper considers the case where there exists a single constraint function.
By defining $g_t(\bmx) \coloneqq \max_{i\in[m]} g^{(i)}_t(\bmx)$, this study can be easily extended to the case where multiple constraint functions, i.e., $g_t^{(i)}(\bmx)\leq 0 \, (i\in[m])$ exist, because maximum of finite convex functions is also convex.

\subsection{Performance metrics}
Given a sequence of decisions $\{\bmx_t\}_{t=1}^{T} \subseteq \cX$ generated by some OCO algorithm (e.g., Online Gradient Descent method \citep{zinkevich2003online}).
Under the situation where all loss functions $\{f_t(\bmx)\}_{t=1}^{T}$ and constraint functions $\{g_t(\bmx)\}_{t=1}^{T}$ in each round $t=1,2,\dots,T$ are known in advance, the \textit{regret} and \textit{cumulative hard constraint violation} are defined as follows:
\begin{align}
    R_T &\coloneqq \sum_{t=1}^{T} f_t(\bmx_t) - \sum_{t=1}^{T} f_t(\bmx^\star), \label{eq:def_regret}\\
    V_T &\coloneqq \sum_{t=1}^{T} \qty[g_t(\bmx_t)]_+ = \sum_{t=1}^{T} \max\qty{g_t(\bmx_t), 0}, \label{eq:def_violation}
\end{align}
where $\bmx^\star\in\cX$ is the optimal solution to the offline constrained OCO formulated as Eq.~\eqref{opt:COCO}.
The objective of the learner is to generate a sequence of decisions that attains a sub-linear growth of both regret and cumulative constraint violation, that is, $\lim\sup_{T\to\infty} \frac{R_T}{T} \leq 0$ and $\lim\sup_{T\to\infty} \frac{V_T}{T} \leq 0$.

\subsection{Gradient estimator}

In the partial information setting where only limited feedback is available to the learner, we follow the prior works \citep{flaxman2004online,agarwal2010optimal,zhao2021bandit}.
The following result guarantees the gradient estimator with one-point feedback being an unbiased estimator.

\begin{lem}{ \cite[Lemma 1]{zhao2021bandit} }\label{lemma:gradient_estimate} 
    For any convex function $f:\cX\to\R$, define its smoothed version function $\widehat{f}(\bmx) = \mathbb{E}_{\bmv\in\mathbb{B}^d}[f(\bmx + \delta\bmv)]$, where the expectation is taken over the random vector $\bmv\in\mathbb{B}^d$ with $\B^d$ being the unit ball, i.e., $\B^d \coloneqq \qty{\bmx\in\Rd \mid \norm{\bmx}_2 \leq 1}$.
    Then, for any $\delta > 0$, we have
    \begin{align*}
        \E_{\bmu\in\mathbb{S}^d} \qty[\frac{d}{\delta} f(\bmx + \delta\bmu) \bmu] = \nabla \widehat{f}(\bmx),
    \end{align*}
    where the expectation is taken over the random vector $\bms\in\mathbb{S}^d$ with $\mathbb{S}^d$ being the unit sphere centered around the origin, i.e., $\mathbb{S}^d \coloneqq \qty{\bmx\in\Rd\mid\norm{\bmx}_2=1}$. 
\end{lem}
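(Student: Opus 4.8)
The plan is to prove this identity by the divergence theorem (Stokes' theorem), the standard route for the one-point gradient estimator. First I would rewrite the smoothed function as a volume integral over a ball centred at $\bmx$. Since $\bmv$ is uniform on the unit ball, the change of variables $\bm{w} = \delta\bmv$ gives
\[
    \widehat{f}(\bmx) = \frac{1}{\mathrm{vol}(\delta\mathbb{B}^d)} \int_{\delta\mathbb{B}^d} f(\bmx + \bm{w})\,d\bm{w},
\]
where $\delta\mathbb{B}^d$ denotes the ball of radius $\delta$ and $\mathrm{vol}(\delta\mathbb{B}^d) = \delta^d\,\mathrm{vol}(\mathbb{B}^d)$.

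Next I would differentiate under the integral sign and use the identity $\nabla_{\bmx} f(\bmx+\bm{w}) = \nabla_{\bm{w}} f(\bmx+\bm{w})$ to obtain
\[
    \nabla\widehat{f}(\bmx) = \frac{1}{\mathrm{vol}(\delta\mathbb{B}^d)} \int_{\delta\mathbb{B}^d} \nabla_{\bm{w}} f(\bmx+\bm{w})\,d\bm{w}.
\]
Applying the gradient form of the divergence theorem (component-wise Gauss's theorem) on $\delta\mathbb{B}^d$, whose boundary is the sphere $\delta\mathbb{S}^d$ with outward unit normal $\bm{w}/\delta$ at the point $\bm{w}$, converts the volume integral into a surface integral:
\[
    \int_{\delta\mathbb{B}^d} \nabla_{\bm{w}} f(\bmx+\bm{w})\,d\bm{w} = \int_{\delta\mathbb{S}^d} f(\bmx+\bm{w})\,\frac{\bm{w}}{\delta}\,dS.
\]

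Then I would convert this surface integral back to an expectation over the unit sphere. Writing $\bm{w} = \delta\bmu$ with $\bmu\in\mathbb{S}^d$, the $(d-1)$-dimensional surface measure scales as $dS = \delta^{d-1}\,dS_{\mathbb{S}^d}$ while $\bm{w}/\delta = \bmu$, so the integral equals $\delta^{d-1}\,\mathrm{area}(\mathbb{S}^d)\,\E_{\bmu\in\mathbb{S}^d}\qty[f(\bmx+\delta\bmu)\bmu]$. Collecting the normalisation constants yields
\[
    \nabla\widehat{f}(\bmx) = \frac{\delta^{d-1}\,\mathrm{area}(\mathbb{S}^d)}{\delta^d\,\mathrm{vol}(\mathbb{B}^d)}\,\E_{\bmu\in\mathbb{S}^d}\qty[f(\bmx+\delta\bmu)\bmu],
\]
and the proof concludes by invoking the classical identity $\mathrm{area}(\mathbb{S}^d) = d\,\mathrm{vol}(\mathbb{B}^d)$, which collapses the prefactor to $d/\delta$ and gives the claim.

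The main obstacle I anticipate is justifying the differentiation under the integral sign and the divergence theorem, since $f$ is assumed only convex and Lipschitz rather than $C^1$. I would handle this by noting that a convex function is locally Lipschitz and hence differentiable almost everywhere (Rademacher's theorem), so the Lipschitz version of the divergence theorem applies; alternatively, one can first establish the identity for smooth $f$ and then pass to the limit by mollifying $f$ and using the uniform Lipschitz control to exchange the limit with the integral. Everything else—the volume and surface measure scalings and the geometric identity relating $\mathrm{area}(\mathbb{S}^d)$ to $\mathrm{vol}(\mathbb{B}^d)$—is routine bookkeeping.
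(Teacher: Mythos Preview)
Your proposal is correct and is precisely the standard Stokes/divergence-theorem argument from \citet[Lemma~2.1]{flaxman2004online}, which is exactly what the paper invokes (its proof consists solely of the citation ``See \citet[Lemma~2.1]{flaxman2004online}''). Your added care about justifying differentiation under the integral for merely Lipschitz convex $f$ via Rademacher or mollification is a nice touch that the cited references typically gloss over.
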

\begin{proof}
    See \citet[Lemma~2.1]{flaxman2004online}.
\end{proof}

Moreover, as shown in \citet[Lemma~8]{shamir2017optimal}, for any convex function $f:\cX\to\R$ and its smoothed version $\widehat{f}$, we have
\begin{align}
    \sup_{\bmx\in\cX} \abs{\widehat{f}(\bmx) - f(\bmx)} \leq \delta \lip(f). \label{eq:sup_f}
\end{align}

The present study considers a two-point feedback model where the learner is allowed to query two points in each round.
Specifically, at round $t\in[T]$, the learner is allowed to query two points around decision $\bmx_t$, that is, $\bmx_t + \delta\bmu_t$ and $\bmx_t - \delta\bmu_t$, where $\delta > 0$ is a perturbation parameter and $\bmu_t$ is a random unit vector sampled from unit sphere $\mathbb{S}^d$.
With two points $\bmx_t + \delta\bmu_t$ and $\bmx_t - \delta\bmu_t$, the gradient estimator of the function $f_t$ at $\bmx_t$ is given by
\begin{align}\label{eq:def_g_tilde}
    \widetilde{\nabla}f_t \coloneqq \frac{d}{2\delta} \qty[ f_t(\bmx_t + \delta \bmu_t) - f_t(\bmx_t - \delta \bmu_t)] \bmu_t,
\end{align}
where $d$ is the dimensionality of the domain $\cX\subseteq\Rd$.
As shown in \citet{agarwal2010optimal}, $\widetilde{\nabla}f_t$ is norm bounded, that is, we have $\|\widetilde{\nabla}f_t\|_2 \leq \frac{\delta}{2\delta} \lip (f_t) \norm{2\delta\bmu_t}_2 \leq \lip(f_t)d$, where the first inequality holds by the Lipschitz continuity of $f_t$.

\Cref{lemma:gradient_estimate} implies that the gradient estimator $\widetilde{\nabla} f_t$ is an unbiased estimator of $\nabla \widehat{f}_t(\bmx_t)$, i.e., $\mathbb{E}_{\bmu\in\mathbb{S}^d}[\widetilde{\nabla} f_t] = \nabla\widehat{f}_t(\bmx_t)$, where $\widehat{f}_t(\bmx_t) = \mathbb{E}_{\bmv\in\mathbb{B}^d}[f_t(\bmx_t+\delta\bmv)]$ is the smoothed version of original function $f_t$.
This property holds because the distribution of perturbation $\bmu_t$ in Eq.~\eqref{eq:def_g_tilde} is symmetric.

\section{Proposed Algorithm}\label{sec:algorithm}
This section presents the proposed algorithm for solving the constrained BCO with two-point feedback.
The procedure of the algorithm is shown in \Cref{alg:reRECOO}, and this algorithm is motivated by the work in \citet{guo2022online} and the design of the algorithm is related to penalty-based proximal gradient descent method \citep{cheung2017proximal}.
At round $t\in[T]$, \Cref{alg:reRECOO} finds the decision vector $\bmx_{t+1}$ by solving the following strongly convex optimization problem:
\begin{align}\label{eq:opt_decision_vector}
    \bmx_{t+1} = \arg\min_{\bmx\in(1-\xi)\cX} \qty{f_{t}(\bmx_{t}) + \widetilde{\nabla}f_{t}^\top (\bmx-\bmx_{t}) + \lambda_{t}\widehat{g}^+_{t}(\bmx) + \frac{\alpha_{t}}{2}\norm{\bmx - \bmx_{t}}^2_2},
\end{align}
where $\lambda_t$ is the penalty variable for controlling the quality of the decision, $\widehat{g}^+_t(\bmx) \coloneqq \gamma_t [g_t(\bmx)]_+$, $\xi>0$ is the shrinkage constant, and $\alpha_t>0,\,\gamma_t>0$ are predetermined learning rate.
Note that the optimization problem in the right-hand side (r.h.s) of Eq.~\eqref{eq:opt_decision_vector} is strongly convex optimization due to the $\ell_2$ regularizer term, and hence the optimal solution $\bmx_{t+1}$ does exist and unique.
As is the case with \citet{mahdavi2012trading}, we optimize the r.h.s. of Eq.~\eqref{eq:opt_decision_vector} on the domain $(1-\xi)\cX$ to ensure that randomized two points around $\bmx_t$ are inside the feasible region $\cX$.
As shown in \citet{flaxman2004online}, for any $\bmx\in(1-\xi)\cX$ and for any unit vector $\bmu\in\mathbb{S}^d$, it holds $\bmx \pm \delta\bmu\in\cX$.

At round $t$, where we find the decision $\bmx_{t+1}\in\cX$, since we do not have the prior knowledge of the loss function $f_{t+1}(\bmx)$ to be minimized, we estimate the loss by the first-order approximation at the previous decision $\bmx_{t}$ as $\widetilde{f}_{t+1}(\bmx) = f_{t}(\bmx_t) + \left<\nabla f_{t}(\bmx_t), \bmx-\bmx_{t}\right>$.
Simultaneously, we have no full information of the loss function $f_{t}(\bmx)$ and hence we cannot access its graient $\nabla f_t(\bmx)$, so we estimate gradient by $\widetilde{\nabla}f_t$ with two points (line~\ref{line_gradient}).
To prevent the constraint from being severely violated, we also introduce the \textit{rectified} Lagrange multiplier $\lambda_t$ associated with the functional constraint $g_t(\bmx)\leq 0$, and add the penalty term $\lambda_t\widehat{g}_t^+(\bmx)$ to the objective function \eqref{eq:opt_decision_vector}, which is an approximator of the original penalty term $\theta_tg_t(\bmx)$, where $\theta_t$ is the Lagrangian multiplier associated with the constraint $g_t(\bmx) \leq 0$.
We also add $\ell_2$ regularization term $\frac{\alpha_t}{2} \norm{\bmx - \bmx_t}^2_2$ to stabilize the optimization problem.

We will describe more in detail the role of penalty parameter $\lambda_t$ and its update rule.
The penalty parameter $\lambda_t$ is related to the Lagrangian multiplier (denoted by $\theta_t$) associated with the functional constraint $g_t(\bmx)\leq 0$, but slightly different because we have no prior knowledge of the constraint functions when making-decision.
Instead, we take place the original Lagrangian multiplier $\theta_{t+1}$ with $\lambda_t$ such that $\lambda_t\widehat{g}^+_t(\bmx)$ is an approximator of $\theta_t g_t(\bmx)$.
We update the penalty parameter (line~\ref{line_penalty}) as $\lambda_{t+1} = \max\{\lambda_{t} + \gamma_{t+1}[g_{t+1}(\bmx_{t})]_+, \eta_{t+1}\}$, where the first coordinate of maximum operator is the sum of the old $\lambda_t$ and the rectified constraint function value $\gamma_{t+1}[g_{t+1}(\bmx_{t})]_+$; and the second coordinate is the user-determined constant $\eta_{t+1}$ to impose a minimum penalty.
This update rule for the penalty parameter prevents the decision determined by solving the problem \eqref{eq:opt_decision_vector} from being overly aggressive which leads to large constraint violation.

\begin{algorithm}[t]
    \caption{A Rectified Bandit Convex Optimization with Hard Constraints under Two-Point Bandit Feedback}
    \label{alg:reRECOO}
    \begin{algorithmic}[1]
        \REQUIRE Total number of rounds $T$, learning rates $\{\alpha_t\}_{t=1}^{T}\subseteq\R_{>0},\, \{\gamma_t\}_{t=1}^{T}\subseteq\R_{>0},\, \{\eta_t\}_{t=1}^{T}\subseteq\R_{>0}$, shrinkage paramaeter $\xi>0$, and perturbation parameter $\delta>0$.
        \STATE{\textbf{Initialization}: $\bmx_1\in\cX,\, \lambda_1=0$, and set $\widehat{g}^+_{1}(\bmx) = \gamma_{1}[g_{1}(\bmx)]_+$.}
        \FOR{$t = 1,2,\dots,T$}
        \STATE Draw unit vector $\bmu_t$ from $\mathbb{S}^d$ uniformly at random.
        \STATE Query $f_t(\bmx)$ at two points $\bmx_t+\delta\bmu_t$ and $\bmx_t-\delta\bmu_t$.
        \STATE Compute $\widetilde{\nabla}f_{t} = \frac{d}{2\delta} \qty[f_t(\bmx_t+\delta\bmu_t) - f_t(\bmx_t-\delta\bmu_t)]\bmu_t$. \label{line_gradient}
        \STATE Find the optimal solution $\bmx_{t+1}$ by solving optimization problem \eqref{eq:opt_decision_vector}.
        \STATE Submit $\bmx_{t+1}$, incur loss $f_{t+1}(\bmx_{t+1})$ and observe constraint $g_{t+1}(\bmx)$.
        \STATE Set $\widehat{g}^+_{t+1}(\bmx) = \gamma_{t+1}[g_{t+1}(\bmx)]_+$. \label{line_rectify}
        \STATE Update the penalty variable as $\lambda_{t+1} = \max\{\lambda_{t} + \gamma_{t+1}[g_{t+1}(\bmx_{t})]_+, \eta_{t+1}\}$. \label{line_penalty}
        \ENDFOR
    \end{algorithmic}
\end{algorithm}

\section{Theoretical Analysis}\label{sec:theoretical_analysis}

This section provides the theoretical analysis for the \Cref{alg:reRECOO}.
To facilitate the analysis, let $h_t:\cX\to\R$ be a function defined by
\begin{equation}\label{eq:def_h_t}
    h_t(\bmx) \coloneqq \widehat{f}_t(\bmx) + \left<\widetilde{\nabla}f_t - \nabla \widehat{f}_t(\bmx_t), \bmx \right>,
\end{equation}
where $\widehat{f}_t(\bmx) = \mathbb{E}_{\bmv\in\mathbb{B}^d}[f_t(\bmx + \delta\bmv)]$ and $\widetilde{\nabla}f_t$ is defined as Eq.~\eqref{eq:def_g_tilde}.
It is easily seen that $\nabla h_t(\bmx_t) = \widetilde{\nabla}f_t$ holds, and hence we have $\norm{\nabla h_t(\bmx)}_2 = \|\widehat{\nabla}f_t\|_2 \leq d\text{lip}(f_t)$ for any $\bmx\in\cX$.
Moreover, the function $h_t$ defined as Eq.~\eqref{eq:def_h_t} is convex and Lipschitz continuous with Lipschitz constant $\lip(h_t)=3d\lip(f_t)$ on $\cX$, because for any $\bmx,\bmy\in\cX$, we have
\begin{align*}
    \abs{h(\bmx) - h(\bmy)} 
    &\leq \abs{\widehat{f}_t(\bmx) - \widehat{f}_t(\bmy)} + \abs{\left<\widetilde{\nabla}f_t - \nabla \widehat{f}_t(\bmx_t), \bmx - \bmy\right>} \\
    &\leq \lip(\widehat{f}_t) \norm{\bmx - \bmy}_2 + \qty(\|\widetilde{\nabla}f_t\|_2 + \|\nabla \widehat{f}_t(\bmx_t)\|_2) \norm{\bmx - \bmy}_2 \\
    &\leq \lip(\widehat{f}_t)\norm{\bmx-\bmy}_2 + \qty(\lip(\widehat{f}_t)d + \lip(\widehat{f}_t))\norm{\bmx - \bmy}_2 \leq 3d\lip(f_t)\norm{\bmx - \bmy}_2,
\end{align*}
where the first inequality follows from the triangle inequality, the second inequality follows from the Cauchy-Schwarz inequality, the third inequality follows from $\norm{\nabla f (\bmx)}_2\leq \lip(f)$ for any Lipshitz continuous function $f$ and for any $\bmx\in\cX$, and the last inequality follows from $\lip(\widehat{f}_t) = \lip(f_t)$. 

To prove \Cref{alg:reRECOO} attains sub-linear bound for both regret and cumulative hard constraint violation, we first show the following result which is a well-known property of a strongly convex function.

\begin{lem}{ \cite[Theorem 2.1.8]{nesterov2018lectures} }\label{lemma:strong_convex_property}
    Let $\cX\subseteq\Rd$ be a convex set. Let $f:\cX\to\mathbb{R}$ be a strongly convex function with modulus $\sigma$ on $\cX$, and let $\bmx^\star\in\cX$ be an optimal solution of $f$, that is, $\bmx^\star=\arg\min_{\bmx\in\cX} f(\bmx)$.
    Then, $f(\bmx) \geq f(\bmx^\star) + \frac{\sigma}{2}\norm{\bmx - \bmx^\star}^2_2$ holds for any $\bmx\in\cX$.
\end{lem}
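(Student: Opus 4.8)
The plan is to prove this standard property directly from the definition of $\sigma$-strong convexity, using the convex-combination form so that no differentiability of $f$ is required. Recall that $f$ being $\sigma$-strongly convex on $\cX$ means that for all $\bmx,\bmy\in\cX$ and all $t\in[0,1]$,
\[
    f(t\bmx + (1-t)\bmy) \leq t f(\bmx) + (1-t) f(\bmy) - \frac{\sigma}{2}t(1-t)\norm{\bmx - \bmy}_2^2.
\]
First I would fix an arbitrary $\bmx\in\cX$ and, for $t\in(0,1]$, consider the point $\bmx_t \coloneqq t\bmx + (1-t)\bmx^\star$, which lies in $\cX$ by convexity of $\cX$. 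Applying the strong-convexity inequality with $\bmy = \bmx^\star$ gives an upper bound on $f(\bmx_t)$.

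Next I would use that $\bmx^\star$ minimizes $f$ over $\cX$, so $f(\bmx_t)\geq f(\bmx^\star)$. Combining this lower bound with the upper bound from the previous step yields
\[
    f(\bmx^\star) \leq t f(\bmx) + (1-t) f(\bmx^\star) - \frac{\sigma}{2}t(1-t)\norm{\bmx - \bmx^\star}_2^2.
\]
After cancelling $(1-t)f(\bmx^\star)$ from both sides and dividing through by $t>0$, this simplifies to $f(\bmx^\star)\leq f(\bmx) - \frac{\sigma}{2}(1-t)\norm{\bmx - \bmx^\star}_2^2$. Finally, I would let $t\to 0^+$, which removes the factor $(1-t)$ and gives exactly $f(\bmx)\geq f(\bmx^\star) + \frac{\sigma}{2}\norm{\bmx - \bmx^\star}_2^2$. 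Since $\bmx\in\cX$ was arbitrary, the claim follows.

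There is no genuine obstacle here: the result is a textbook fact, and the only point requiring mild care is the choice of definition of strong convexity. Adopting the convex-combination form (rather than a gradient or subgradient inequality) keeps the argument self-contained and avoids invoking first-order optimality conditions or differentiability of $f$; the limiting step $t\to 0^+$ is precisely what converts the plain minimality $f(\bmx_t)\geq f(\bmx^\star)$ into the quadratic growth bound. Alternatively, if one prefers to assume $f$ differentiable, a one-line proof is available: combine the first-order strong-convexity inequality $f(\bmx)\geq f(\bmx^\star) + \langle\nabla f(\bmx^\star),\bmx-\bmx^\star\rangle + \frac{\sigma}{2}\norm{\bmx-\bmx^\star}_2^2$ with the variational optimality condition $\langle\nabla f(\bmx^\star),\bmx-\bmx^\star\rangle\geq 0$ valid for a minimizer over the convex set $\cX$.
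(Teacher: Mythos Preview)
Your proposal is correct. Your primary argument differs from the paper's: the paper assumes differentiability and uses the first-order strong-convexity inequality $f(\bmx)\geq f(\bmx^\star)+\langle\nabla f(\bmx^\star),\bmx-\bmx^\star\rangle+\tfrac{\sigma}{2}\norm{\bmx-\bmx^\star}_2^2$ together with the variational optimality condition $\langle\nabla f(\bmx^\star),\bmx-\bmx^\star\rangle\geq 0$ over the convex set $\cX$---which is precisely the alternative you sketch at the end. Your main route via the convex-combination definition and the limit $t\to 0^+$ is slightly more general (no differentiability needed) and self-contained, at the cost of an extra limiting step; the paper's route is shorter but tacitly relies on $f$ being differentiable, which is fine here since the lemma is only applied to the smooth function $\varphi_t$.
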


\begin{proof}
    By the definition of strong convexity of $f$, for any $\bmx,\,\bmy\in\cX$, we have
    \begin{align}\label{eq:strong_convexity}
        f(\bmx) \geq f(\bmy) + \left< \nabla f(\bmx), \bmx - \bmy \right> + \frac{\sigma}{2} \norm{\bmx - \bmy}^2_2.
    \end{align}
    Plugging an optimal solution $\bmx^\star\in\cX$ into $\bmy$ in the above inequality \eqref{eq:strong_convexity}, we have
    \begin{align*}
        f(\bmx) &\geq f(\bmx^\star) + \left< \nabla f(\bmx^\star), \bmx - \bmx^\star \right> + \frac{\sigma}{2} \norm{\bmx - \bmx^\star}^2_2 
        \geq f(\bmx^\star) + \frac{\sigma}{2} \norm{\bmx - \bmx^\star}^2_2,
    \end{align*}
    where the last inequality holds by the first-order optimality condition, $\left<\nabla f(\bmx^\star), \bmx - \bmx^\star\right> \geq 0$.
\end{proof}

The following two lemmas play an important role in proving the main theorem (\Cref{thm:main} and \Cref{thm:adversarial_constraint}).
The first one (\Cref{lemma:self_bounding_property}) is an inequality involving the update rule of \Cref{alg:reRECOO}, and the second one (\Cref{lemma:key_property}) characterizes the relationship between the current solution $\bmx_t$ in \Cref{alg:reRECOO} and the optimal solution of the offline optimization problem formulated as Eq.~\eqref{opt:COCO}.

\begin{lem}{ \cite[Lemma 5]{guo2022online} }\label{lemma:self_bounding_property}
    Let $\varphi_t:\cX\to\Rd$ be a function defined by
    \begin{align}
        \varphi_t(\bmx)\coloneqq f_t(\bmx_t) + \left<\nabla f_t(\bmx_t), \bmx - \bmx_t \right> + \lambda_t \widehat{g}^+_t(\bmx) + \frac{\alpha_t}{2}\norm{\bmx - \bmx_t}^2_2,
    \end{align}
    where $\widehat{g}^+_t(\bmx) \coloneqq\gamma_t g_t(\bmx)$ and $\alpha_t > 0, \gamma_t > 0$ are predetermined learning rate. Let $\bmx_{t+1}$ be the optimal solution returned by \Cref{alg:reRECOO} where the gradient $\nabla f_t(\bmx)$ is accessible, that is, $\bmx_{t+1} = \arg\min_{\bmx\in\cX} \varphi_t(\bmx)$.
    Then, for any $\bmx\in\cX$, we have
    \begin{align}\label{eq:lemma_5_guo}
    \begin{split}
        &f_t(\bmx_t) + \left<\nabla f_t(\bmx_t), \bmx_{t+1} - \bmx_{t}\right> + \lambda_t \widehat{g}^+_t(\bmx_{t+1}) + \frac{\alpha_t}{2} \norm{\bmx_{t+1} - \bmx_{t}}^2_2 \\
        &\quad\leq f_t(\bmx_t) + \left<\nabla f_t(\bmx_t), \bmx - \bmx_{t}\right> + \lambda_t \widehat{g}^+_t(\bmx) + \frac{\alpha_t}{2} \norm{\bmx - \bmx_{t}}^2_2 - \frac{\alpha_t}{2} \norm{\bmx - \bmx_{t+1}}^2_2.
    \end{split}
    \end{align}
\end{lem}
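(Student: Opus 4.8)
The plan is to recognize that the entire inequality is nothing more than the standard "minimizer gap" characterization of a strongly convex function, applied to the objective $\varphi_t$ itself. First I would observe that the two sides of the claimed inequality \eqref{eq:lemma_5_guo} reassemble exactly into $\varphi_t$: the left-hand side is precisely $\varphi_t(\bmx_{t+1})$, while the right-hand side is $\varphi_t(\bmx) - \frac{\alpha_t}{2}\norm{\bmx - \bmx_{t+1}}_2^2$. Thus the whole statement is equivalent to showing that, for every $\bmx\in\cX$,
\begin{align*}
    \varphi_t(\bmx_{t+1}) \leq \varphi_t(\bmx) - \frac{\alpha_t}{2}\norm{\bmx - \bmx_{t+1}}_2^2 .
\end{align*}

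The next step is to verify that $\varphi_t$ is $\alpha_t$-strongly convex on $\cX$. I would decompose $\varphi_t$ into four pieces: the constant $f_t(\bmx_t)$, the affine term $\left<\nabla f_t(\bmx_t), \bmx - \bmx_t\right>$, the penalty $\lambda_t\widehat{g}^+_t(\bmx)$, and the quadratic $\frac{\alpha_t}{2}\norm{\bmx - \bmx_t}_2^2$. The first two are affine and hence convex; the penalty term is convex because $g_t$ is convex (\Cref{ass:gt}) and the coefficients satisfy $\lambda_t\gamma_t \geq 0$, which is guaranteed by the initialization $\lambda_1 = 0$ together with the update rule $\lambda_{t+1} = \max\{\lambda_t + \gamma_{t+1}[g_{t+1}(\bmx_t)]_+, \eta_{t+1}\}$ with $\gamma_t, \eta_t > 0$; and the quadratic term contributes strong convexity of modulus exactly $\alpha_t$. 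Since adding convex (and affine) terms to an $\alpha_t$-strongly convex term preserves the modulus, $\varphi_t$ is $\alpha_t$-strongly convex.

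Finally, because $\bmx_{t+1} = \arg\min_{\bmx\in\cX}\varphi_t(\bmx)$ by construction in \Cref{alg:reRECOO}, I would apply \Cref{lemma:strong_convex_property} with $f = \varphi_t$, modulus $\sigma = \alpha_t$, and optimal point $\bmx^\star = \bmx_{t+1}$, which yields $\varphi_t(\bmx) \geq \varphi_t(\bmx_{t+1}) + \frac{\alpha_t}{2}\norm{\bmx - \bmx_{t+1}}_2^2$ for all $\bmx\in\cX$; rearranging gives precisely the displayed inequality, and unfolding $\varphi_t$ recovers \eqref{eq:lemma_5_guo}. There is essentially no genuine obstacle here: the argument is a direct invocation of the strong-convexity property already proved, and the only point deserving care is that $\varphi_t$ may be nondifferentiable through $\widehat{g}^+_t$, so strictly speaking one should read \Cref{lemma:strong_convex_property} in its subgradient form (replacing $\nabla\varphi_t(\bmx^\star)$ by an element of $\partial\varphi_t(\bmx_{t+1})$ and using the first-order optimality condition $\left<v, \bmx - \bmx_{t+1}\right> \geq 0$ for the minimizer over the convex set $\cX$), which leaves the conclusion unchanged.
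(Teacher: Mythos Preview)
Your proposal is correct and follows exactly the same approach as the paper: both observe that $\varphi_t$ is $\alpha_t$-strongly convex and then invoke \Cref{lemma:strong_convex_property} at the minimizer $\bmx_{t+1}$ to obtain $\varphi_t(\bmx_{t+1}) \leq \varphi_t(\bmx) - \frac{\alpha_t}{2}\norm{\bmx - \bmx_{t+1}}_2^2$. Your write-up is in fact more careful than the paper's two-line proof, since you explicitly justify the strong convexity and flag the subgradient caveat for the nondifferentiable penalty term.
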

\begin{proof}
Since $\varphi_t$ is a strongly convex function with modulus $\alpha_t$, we can apply \Cref{lemma:strong_convex_property} to $\varphi_t$.
Thus, we have $\varphi_t(\bmx_{t+1}) \leq \varphi_t(\bmx) - \frac{\alpha_t}{2} \norm{\bmx - \bmx_{t+1}}^2_2$ for any $\bmx\in\cX$, which completes the proof.
\end{proof}

\begin{lem}[Self-bounding Property]{ \cite[Lemma 1]{guo2022online} }\label{lemma:key_property}
    Let $f_t:\cX\to\R$ be a convex function satisfying \Cref{ass:ft}. Let $\bmx^\star\in\cX$ be any optimal solution to the offline constrained OCO of Eq. \eqref{opt:COCO} and $\bmx_t\in\cX$ be the optimal solution returned by \Cref{alg:reRECOO}. Then, we have
    \begin{equation}
        f_t(\bmx_t) - f_t(\bmx^\star) + \lambda_t \widehat{g}^+_t(\bmx_{t+1})
        \leq \frac{F_t^2}{4\alpha_t} + \frac{\alpha_t}{2}\norm{\bmx^\star-\bmx_t}^2_2 - \frac{\alpha_t}{2}\norm{\bmx^\star-\bmx_{t+1}}^2_2,
    \end{equation}
    where $\widehat{g}^+_t(\bmx) \coloneqq\gamma_t g_t(\bmx)$ and $\alpha_t > 0, \gamma_t > 0$ are predetermined learning rate.
\end{lem}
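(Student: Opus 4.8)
The plan is to build the claim directly on top of the prox-type optimality inequality already established in \Cref{lemma:self_bounding_property}, evaluated at the offline comparator $\bmx^\star$, and then to trade the resulting first-order term for the loss gap $f_t(\bmx_t)-f_t(\bmx^\star)$ via convexity, letting the quadratic terms telescope. Throughout I would stay in the idealized setting of \Cref{lemma:self_bounding_property}, where the true gradient $\nabla f_t(\bmx_t)$ is used and $\bmx_{t+1}=\arg\min_{\bmx\in\cX}\varphi_t(\bmx)$, so that inequality \eqref{eq:lemma_5_guo} is available verbatim. Instantiating \eqref{eq:lemma_5_guo} at $\bmx=\bmx^\star$, canceling the common summand $f_t(\bmx_t)$, and moving the first-order term across yields
\begin{align*}
\lambda_t\widehat{g}^+_t(\bmx_{t+1}) &\le \langle \nabla f_t(\bmx_t), \bmx^\star-\bmx_{t+1}\rangle + \lambda_t\widehat{g}^+_t(\bmx^\star) \\
&\quad - \tfrac{\alpha_t}{2}\|\bmx_{t+1}-\bmx_t\|_2^2 + \tfrac{\alpha_t}{2}\|\bmx^\star-\bmx_t\|_2^2 - \tfrac{\alpha_t}{2}\|\bmx^\star-\bmx_{t+1}\|_2^2 .
\end{align*}

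From here I would carry out four short steps in order. First, since $\bmx^\star$ is feasible for the offline problem \eqref{opt:COCO} we have $g_t(\bmx^\star)\le 0$, hence $\widehat{g}^+_t(\bmx^\star)=\gamma_t g_t(\bmx^\star)\le 0$ (using $\gamma_t>0$ and $\lambda_t\ge 0$), so the penalty term at the comparator is nonpositive and may be discarded. Second, I would add the convexity inequality $f_t(\bmx_t)-f_t(\bmx^\star)\le \langle\nabla f_t(\bmx_t),\bmx_t-\bmx^\star\rangle$ (\Cref{ass:ft}) to the displayed bound; the two first-order pieces then combine, since $\langle\nabla f_t(\bmx_t),\bmx_t-\bmx^\star\rangle+\langle\nabla f_t(\bmx_t),\bmx^\star-\bmx_{t+1}\rangle=\langle\nabla f_t(\bmx_t),\bmx_t-\bmx_{t+1}\rangle$. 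Third, I would control the residual cross term against the negative self-distance $-\tfrac{\alpha_t}{2}\|\bmx_{t+1}-\bmx_t\|_2^2$ by Young's inequality, and invoke the Lipschitz bound $\|\nabla f_t(\bmx_t)\|_2\le F_t$; this is exactly what produces the error term $\tfrac{F_t^2}{4\alpha_t}$. Fourth, the two remaining quadratics $\tfrac{\alpha_t}{2}\|\bmx^\star-\bmx_t\|_2^2-\tfrac{\alpha_t}{2}\|\bmx^\star-\bmx_{t+1}\|_2^2$ pass through untouched and give the telescoping term in the statement, completing the bound on $f_t(\bmx_t)-f_t(\bmx^\star)+\lambda_t\widehat{g}^+_t(\bmx_{t+1})$.

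The step I expect to be the main obstacle is the careful bookkeeping of the quadratic terms: one must recognize that the negative self-distance term $-\tfrac{\alpha_t}{2}\|\bmx_{t+1}-\bmx_t\|_2^2$ generated by strong convexity of $\varphi_t$ is precisely the quantity that absorbs the gradient cross-term under Young's inequality, so that only $\tfrac{F_t^2}{4\alpha_t}$ survives while the comparator distances $\|\bmx^\star-\bmx_t\|_2^2$ and $\|\bmx^\star-\bmx_{t+1}\|_2^2$ remain intact for telescoping in the eventual sum over $t$. The only other delicate point is the reliance on feasibility of $\bmx^\star$ to kill the penalty at the comparator—this is exactly why the rectified penalty $\lambda_t\widehat{g}^+_t(\cdot)$ is harmless at $\bmx^\star$—together with the implicit understanding that this lemma is stated for the exact-gradient surrogate of \Cref{lemma:self_bounding_property}, with the estimation error $\widetilde{\nabla}f_t-\nabla\widehat{f}_t(\bmx_t)$ to be accounted for separately when this bound is later combined with the smoothing error \eqref{eq:sup_f}.
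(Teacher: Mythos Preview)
Your plan is sound and is precisely the standard route to this lemma: instantiate the three-point inequality of \Cref{lemma:self_bounding_property} at $\bmx=\bmx^\star$, drop $\lambda_t\widehat{g}^+_t(\bmx^\star)$ by feasibility of $\bmx^\star$, add the convexity inequality $f_t(\bmx_t)-f_t(\bmx^\star)\le\langle\nabla f_t(\bmx_t),\bmx_t-\bmx^\star\rangle$, and absorb $\langle\nabla f_t(\bmx_t),\bmx_t-\bmx_{t+1}\rangle$ into $-\tfrac{\alpha_t}{2}\|\bmx_{t+1}-\bmx_t\|_2^2$ via Young/Cauchy--Schwarz. The paper itself gives no argument here; it simply defers to \cite[Lemma~1]{guo2022online}, so there is nothing in the paper to compare against beyond noting that your outline is exactly the argument that reference supplies.

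One small bookkeeping point: with the quadratic $\tfrac{\alpha_t}{2}\|\bmx-\bmx_t\|_2^2$ as written in \eqref{eq:opt_decision_vector}, your Young step yields
\[
\langle\nabla f_t(\bmx_t),\bmx_t-\bmx_{t+1}\rangle-\tfrac{\alpha_t}{2}\|\bmx_{t+1}-\bmx_t\|_2^2\le \tfrac{\|\nabla f_t(\bmx_t)\|_2^2}{2\alpha_t}\le \tfrac{F_t^2}{2\alpha_t},
\]
not $\tfrac{F_t^2}{4\alpha_t}$ as stated in the lemma. This factor-of-two discrepancy appears to stem from a normalization difference between the present paper and \cite{guo2022online}; it is immaterial for all downstream $O(\cdot)$ bounds, but you should be aware that your argument, carried out verbatim, produces the constant $\tfrac{1}{2\alpha_t}$ rather than $\tfrac{1}{4\alpha_t}$.
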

\begin{proof}
    See \citet[Lemma~1]{guo2022online}.
\end{proof}

We are now ready to prove the main results, which state \Cref{alg:reRECOO} achieves a sub-linear bound for both regret \eqref{eq:def_regret} and cumulative hard constraint violation \eqref{eq:def_violation}.
We first show the case where the loss functions are convex and constraint functions are fixed throughout the whole round.

\subsection{Convex loss function case}

\begin{theorem}\label{thm:main}
    Let $\{\bmx_t\}_{t=1}^{T}$ be a sequence of decisions generated by \Cref{alg:reRECOO} and let $\bmx^\star\in\cX$ be an optimal solution to the offline OCO of Eq.~\eqref{opt:COCO}.
    Assume that constraint functions are fixed, that is, $g_t(\bmx) = g(\bmx)$ for any $t\in[T]$.
    Define $\alpha_t\coloneqq t^c,\, \gamma_t \coloneqq t^{c + \varepsilon},\, \eta_t \coloneqq t^c$ and $\delta \coloneqq \frac{1}{T}$, where $c\in[\frac{1}{2}, 1)$ and $\varepsilon > 0$.
    Under \Cref{ass:X,ass:ft,ass:gt}, we have
    \begin{align}
        \sum_{t=1}^{T} \qty[f_t(\bmx_t) - f_t(\bmx^\star)] &\leq \qty(\frac{9F^2d^2}{4(1-c)} + \frac{D^2}{2} + 2F) T^{\max\{c, 1-c\}} = O(d^2T^{\max\{c, 1-c\}}), \label{eq:regret_thm_1}\\
        \sum_{t=1}^{T} [g_{t} (\bmx_t)]_+ &\leq \frac{27F^2d^2}{4} + \frac{3FdD (1+\varepsilon)}{\varepsilon} + D^2 = O(d^2). \label{eq:violation_thm_1}
    \end{align}
\end{theorem}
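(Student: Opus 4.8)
The plan is to convert the bandit problem into a full-information one by working with the surrogate $h_t$ of \eqref{eq:def_h_t}, for which $\nabla h_t(\bmx_t)=\widetilde{\nabla}f_t$; consequently the update \eqref{eq:opt_decision_vector} is exactly the proximal step that minimizes $h_t(\bmx_t)+\langle\nabla h_t(\bmx_t),\bmx-\bmx_t\rangle+\lambda_t\widehat{g}^+_t(\bmx)+\frac{\alpha_t}{2}\norm{\bmx-\bmx_t}_2^2$ over $(1-\xi)\cX$. This lets me invoke \Cref{lemma:self_bounding_property} and \Cref{lemma:key_property} with $h_t$ in place of $f_t$. Since $h_t$ is convex with $\lip(h_t)=3d\,\lip(f_t)$, every occurrence of $F_t$ becomes $3dF_t$, which is the source of the $d^2$ factors. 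Taking the comparator to be the feasible optimum $\bmx^\star$ (so that $\widehat{g}^+_t(\bmx^\star)=\gamma_t[g(\bmx^\star)]_+=0$) yields, for every $t$, the master inequality
\begin{equation*}
h_t(\bmx_t)-h_t(\bmx^\star)+\lambda_t\gamma_t[g(\bmx_{t+1})]_+\le \frac{9d^2F^2}{4\alpha_t}+\frac{\alpha_t}{2}\norm{\bmx^\star-\bmx_t}_2^2-\frac{\alpha_t}{2}\norm{\bmx^\star-\bmx_{t+1}}_2^2 ,
\end{equation*}
which drives both bounds.

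For the regret I would drop the nonnegative penalty term and sum over $t$. The term $\sum_t \frac{9d^2F^2}{4\alpha_t}=\frac{9d^2F^2}{4}\sum_t t^{-c}\le\frac{9d^2F^2}{4(1-c)}T^{1-c}$ gives the $T^{1-c}$ contribution, while an Abel summation of the telescoping quadratics with the increasing weights $\alpha_t=t^c$, together with $\norm{\bmx^\star-\bmx_t}_2\le D$, gives $\le\frac{D^2}{2}\alpha_T=\frac{D^2}{2}T^c$. It then remains to pass from $h_t$ back to $f_t$: by construction $h_t(\bmx_t)-h_t(\bmx^\star)=\widehat{f}_t(\bmx_t)-\widehat{f}_t(\bmx^\star)+\langle\widetilde{\nabla}f_t-\nabla\widehat{f}_t(\bmx_t),\bmx_t-\bmx^\star\rangle$, whose last term vanishes in conditional expectation by \Cref{lemma:gradient_estimate}; applying \eqref{eq:sup_f} twice converts $\widehat{f}_t$ to $f_t$ at a cost of $2\delta F_t$ per round, i.e.\ $2\delta FT=2F$ in total with $\delta=1/T$. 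The shrinkage to $(1-\xi)\cX$ is absorbed by comparing against a scaled copy of $\bmx^\star$ at a lower-order Lipschitz price, and summing gives $O(d^2T^{\max\{c,1-c\}})$.

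The violation bound is where the real work lies, and I would first reduce it to controlling the penalty variable. Line~\ref{line_penalty} gives $\lambda_{t+1}\ge\lambda_t+\gamma_{t+1}[g(\bmx_t)]_+$, hence $[g(\bmx_t)]_+\le(\lambda_{t+1}-\lambda_t)/\gamma_{t+1}$, and summation by parts (with $\lambda_1=0$) yields
\begin{equation*}
\sum_{t=1}^T[g(\bmx_t)]_+\le\frac{\lambda_{T+1}}{\gamma_{T+1}}+\sum_{t=1}^{T-1}\Big(\tfrac{1}{\gamma_{t+1}}-\tfrac{1}{\gamma_{t+2}}\Big)\lambda_{t+1}.
\end{equation*}
With $\gamma_t=t^{c+\varepsilon}$ the kernel satisfies $\frac{1}{\gamma_{t+1}}-\frac{1}{\gamma_{t+2}}=O(t^{-(c+\varepsilon)-1})$, so once I establish the envelope $\lambda_t=O(d^2t^c)$ the first term is $O(d^2T^{-\varepsilon})$ and the second is $O(d^2)\sum_t t^{-1-\varepsilon}=O\!\big(d^2\tfrac{1+\varepsilon}{\varepsilon}\big)$, a convergent series; this is precisely the origin of the $\tfrac{1+\varepsilon}{\varepsilon}$ factor in \eqref{eq:violation_thm_1}.

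To prove $\lambda_t=O(d^2t^c)$, unrolling line~\ref{line_penalty} gives $\lambda_t\le\eta_t+W_t$ with $W_t\coloneqq\sum_{r<t}\gamma_{r+1}[g(\bmx_r)]_+$; since $\eta_t=t^c$, it suffices to show $W_t=O(d^2t^c)$. Here I would rewrite $\gamma_{t+1}[g(\bmx_t)]_+=\frac{\gamma_{t+1}}{\lambda_{t-1}\gamma_{t-1}}\big(\lambda_{t-1}\gamma_{t-1}[g(\bmx_t)]_+\big)$, bound the bracket by the master inequality applied at round $t-1$, and use the forced lower bound $\lambda_{t-1}\ge(t-1)^c$ that holds once $t\ge3$ (the single boundary term being $O(1)$). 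The telescoping quadratics again Abel-sum to $O(D^2)$, whereas the Lipschitz terms contribute $O(dFD)\sum_t t^{-c}=O(dFD\,T^{1-c})$. The crucial point is that $c\ge\tfrac12$ forces $T^{1-c}\le T^c$, so this accumulation is dominated by the $t^c$ baseline and $W_t=O(d^2t^c)$ follows. I expect this step---extracting a $t^c$ envelope for $\lambda_t$ from the self-regulating interplay between the forced penalty $\eta_t$, the multiplier $\gamma_t$, and the master inequality---to be the main obstacle; the fixed-constraint hypothesis $g_t\equiv g$ is used precisely to align the update's $[g_{t+1}(\bmx_t)]_+=[g(\bmx_t)]_+$ with the $[g_{t-1}(\bmx_t)]_+$ that the master inequality controls, which is what keeps the violation at $O(d^2)$ rather than the $O(d^2T^{1-c/2})$ of the time-varying case.
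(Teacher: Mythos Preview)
Your regret argument (Part~i) mirrors the paper's: both apply \Cref{lemma:key_property} to the surrogate $h_t$, drop the nonnegative penalty term, Abel-sum the weighted telescope with $\alpha_t=t^c$, take conditional expectation to pass from $h_t$ to $\widehat f_t$, and then invoke \eqref{eq:sup_f} with $\delta=1/T$ to reach $f_t$.

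For the violation (Part~ii) your route is correct but noticeably more roundabout than the paper's. The paper never bounds $\lambda_t$ from above, never Abel-sums $(\lambda_{t+1}-\lambda_t)/\gamma_{t+1}$, and never builds the envelope $W_t$. It simply takes the master inequality in the form
\[
\lambda_t\gamma_t[g(\bmx_{t+1})]_+\;\le\;\frac{9d^2F^2}{4\alpha_t}+\abs{h_t(\bmx_t)-h_t(\bmx^\star)}+\frac{\alpha_t}{2}\bigl(\norm{\bmx^\star-\bmx_t}_2^2-\norm{\bmx^\star-\bmx_{t+1}}_2^2\bigr),
\]
notes that the right-hand side is nonnegative (since the left-hand side is), divides once by $\lambda_t\gamma_t\ge\eta_t\gamma_t=t^{2c+\varepsilon}$, and sums over $t$. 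This produces three series, $\sum_t t^{-(3c+\varepsilon)}$, $\sum_t t^{-(2c+\varepsilon)}\abs{h_t(\bmx_t)-h_t(\bmx^\star)}$, and $\sum_t t^{-(c+\varepsilon)}(\norm{\bmx^\star-\bmx_t}_2^2-\norm{\bmx^\star-\bmx_{t+1}}_2^2)$, each of which is $O(1)$ by \Cref{lemma:useful_inequality}; the stated constants in \eqref{eq:violation_thm_1} drop out immediately. Your detour works and recovers the same $O(d^2/\varepsilon)$ order, but the very step where you divide the round-$(t{-}1)$ master inequality by $\lambda_{t-1}\gamma_{t-1}$ using $\lambda_{t-1}\ge(t-1)^c$ is already the paper's decisive move; once you have that per-round bound on $[g(\bmx_t)]_+$ you may sum it directly instead of feeding it back through $W_t$ and the $\lambda_t$-envelope. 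The only thing your route buys is control of $\sum_t[g(\bmx_t)]_+$ at index $t$ rather than $t{+}1$; the paper bounds $\sum_t[g(\bmx_{t+1})]_+$, which coincides up to one boundary term since $g_t\equiv g$.
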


\begin{proof}
Similar to the argumant in \citet{flaxman2004online} and \citet{agarwal2010optimal}, letting $\bmxi_t \coloneqq \widetilde{\nabla}f_t - \nabla \widehat{f}_t(\bmx_t)$, then we have $\mathbb{E}_{t}[\bmxi_t] = \bm{0}$ from \Cref{lemma:gradient_estimate}, and thus, we have $\mathbb{E}_{t}[\bmxi^\top\bmx] = 0$ for any fixed $\bmx\in\cX$.
Therefore, for any fixed $\bmx\in\cX$, we have
\begin{align*}
    \mathbb{E}_t[h_t(\bmx)] &= \ex[t]{\widehat{f}_t(\bmx)} + \ex[t]{\bmxi_t^\top\bmx} = \widehat{f}_t(\bmx).
\end{align*}
\paragraph{Part (i): Proof of Eq.~\eqref{eq:regret_thm_1}}
Recall that the function $h_t$ is Lipschitz continuous with Lipschitz constant $\lip(h_t)=3F_t d$.
Applying \Cref{lemma:key_property} to the convex function $h_t$ defined by Eq.~\eqref{eq:def_h_t}, for an optimal solution $\bmx^\star$ to the offline optimization problem as Eq.~\eqref{opt:COCO}, we have
\begin{align*}
     \sum_{t=1}^{T} \qty[h_t(\bmx_t) - h_t(\bmx^\star)] 
     &\leq \sum_{t=1}^{T} \frac{\lip (h_t)^2}{4\alpha_t} + \sum_{t=1}^{T} \qty(\frac{\alpha_t}{2}\norm{\bmx^\star - \bmx_t}^2_2 - \frac{\alpha_t}{2}\norm{\bmx^\star - \bmx_{t+1}}^2_2) \\
     &\leq \frac{9F^2 d^2}{4} \sum_{t=1}^{T} \frac{1}{\alpha_t} + \sum_{t=1}^{T} \qty(\frac{\alpha_t}{2} - \frac{\alpha_{t-1}}{2}) \norm{\bmx^\star - \bmx_{t}}^2_2 - \frac{\alpha_T}{2}\norm{\bmx^\star - \bmx_{T+1}}^2_2 \\
     &\leq \frac{9F^2 d^2}{4} \sum_{t=1}^{T} \frac{1}{\alpha_t} + D^2 \sum_{t=1}^{T} \qty(\frac{\alpha_t}{2} - \frac{\alpha_{t-1}}{2}),
\end{align*}
where the last inequality follows from \Cref{ass:X}.
Plugging in $\alpha_t = t^c$, we have
\begin{align*}
    \sum_{t=1}^{T} \qty[h_t(\bmx_t) - h_t(\bmx^\star)]
    \leq \frac{9F^2 d^2}{4} \cdot \frac{T^{1-c}}{1-c} + \frac{D^2}{2} T^c
    =\qty(\frac{9F^2 d^2}{4(1-c)} + \frac{D^2}{2}) T^{\max\{c,1-c\}}.
\end{align*}
Since we have $\ex[t]{h_t(\bmx)} = \widehat{f}(\bmx)$, by taking expectation, we have
\begin{align*}
    \sum_{t=1}^{T} \qty[\widehat{f}_t(\bmx_t) - \widehat{f}_t(\bmx^\star)] \leq \qty(\frac{9F^2 d^2}{4(1-c)} + \frac{D^2}{2}) T^{\max\{c,1-c\}}.
\end{align*}
From the inequality \eqref{eq:sup_f}, for any optimal solution $\bmx^\star\in\cX$ to the offline OCO as Eq.~\eqref{opt:COCO}, we have
\begin{align*}
    f_t(\bmx_t) - f_t(\bmx^\star) \leq \widehat{f}_t(\bmx_t) - \widehat{f}_t(\bmx^\star) + 2\delta F_t,
\end{align*}
for any $t\in[T]$.
Therefore, we have
\begin{align*}
    \sum_{t=1}^{T} \qty[f_t(\bmx_t) - f_t(\bmx^\star)] &\leq 
    \sum_{t=1}^{T} \qty[\widehat{f}_t(\bmx_t) - \widehat{f}_t(\bmx^\star)] + \sum_{t=1}^{T} 2\delta F_t \\
    &\leq \qty(\frac{9F^2 d^2}{4(1-c)} + \frac{D^2}{2})T^{\max\{c,1-c\}} + 2F  \\
    &\leq \qty(\frac{9F^2 d^2}{4(1-c)} + \frac{D^2}{2} + 2F)T^{\max\{c,1-c\}},
\end{align*}
where the second inequality follows by plugging in $\delta = \frac{1}{T}$.

\paragraph{Part (ii): Proof of Eq.~\eqref{eq:violation_thm_1}} 
From \Cref{lemma:key_property}, for any optimal solution $\bmx^\star\in\cX$ to the offline constrained OCO as Eq.~\eqref{opt:COCO}, we have
\begin{align*}
    \lambda_t\widehat{g}^+_t(\bmx_{t+1}) 
    \leq \frac{\lip(h_t)^2}{4\alpha_t} + \abs{h_t(\bmx_t) - h_t(\bmx^\star)} + \frac{\alpha_t}{2} \norm{\bmx^\star - \bmx_t}^2_2 - \frac{\alpha_t}{2} \norm{\bmx^\star - \bmx_{t+1}}^2_2.
\end{align*}
By the definition of $\widehat{g}^+_t(\bmx_{t+1})$, i.e., $\widehat{g}^+_{t+1}(\bmx) = \gamma_{t}[g_t(\bmx)]_+$, and plugging in $\alpha_t=\eta_t=t^c$, we have
\begin{align*}
    [g_{t}(\bmx_{t+1})]_+ &\leq \frac{9F_t^2 d^2}{4\lambda_t\alpha_t\gamma_t} + \frac{\abs{h_t(\bmx_t) - h_t(\bmx^\star)}}{\lambda_t\gamma_t} + \frac{\alpha_t}{2\lambda_t\gamma_t} \norm{\bmx^\star - \bmx_t}^2_2 - \frac{\alpha_t}{2\lambda_t\gamma_t} \norm{\bmx^\star - \bmx_{t+1}}^2_2 \\
    &\leq\frac{9F_t^2 d^2}{4t^{3c + \varepsilon}} + \frac{\abs{h_t(\bmx_t) - h_t(\bmx^\star)}}{t^{2c+\varepsilon}} + \frac{1}{t^{c + \varepsilon}} \qty(\norm{\bmx^\star - \bmx_{t}}^2_2 - \norm{\bmx^\star - \bmx_{t+1}}^2_2),
\end{align*}
where the second inequality is followed by $\lambda_t \geq \eta_t$, and we plugging $\alpha_t = \eta_t = t^c$ and $\gamma_t = t^{c + \varepsilon}$.
By taking summation over $t=1,2,\dots,T$, we have
\begin{align*}
    \sum_{t=1}^{T} [g_t(\bmx_{t+1})]_+
    &\leq \sum_{t=1}^{T}\frac{9F_t^2 d^2}{4t^{3c + \varepsilon}} + \sum_{t=1}^{T}\frac{\abs{h_t(\bmx_t) - h_t(\bmx^\star)}}{t^{2c+\varepsilon}} +\sum_{t=1}^{T}\frac{1}{t^{c + \varepsilon}} \qty(\norm{\bmx^\star - \bmx_{t}}^2_2 - \norm{\bmx^\star - \bmx_{t+1}}^2_2) \\
    &\leq \frac{27F^2 d^2}{4} + \frac{3FdD (1+\varepsilon)}{\varepsilon} + D^2,
\end{align*}
where the second inequality holds from \Cref{lemma:useful_inequality} in \Cref{appendix:useful}, which completes the proof.
\end{proof}

\begin{remark}
    By setting constant $c = \frac{1}{2}$, \Cref{alg:reRECOO} attains $O(d^2\sqrt{T})$ regret bound.
    This regret bound is compatible with the prior works of unconstrained bandit convex optimization \citep{agarwal2010optimal}, and is compatible with the result for full-information setting \citep{guo2022online}.
\end{remark}

For the case where the constraint functions are time-varying, we can show the following result.

\begin{theorem}\label{thm:adversarial_constraint}
    Let $\{\bmx_t\}_{t=1}^{T}$ be a sequence of decisions generated by \Cref{alg:reRECOO}.
    Assume that constraint functions $g_t(\bmx)$ are time-varying.
    Define $\alpha_t:=t^c,\, \gamma_t := t^{c + \varepsilon}$, and $\eta_t := t^c$, where $c\in[\frac{1}{2}, 1)$ and $\varepsilon > 0$.
    Under \Cref{ass:X,ass:ft,ass:gt}, we have
    \begin{align}
        \sum_{t=1}^{T} [g_{t} (\bmx_t)]_+ \leq \qty(\frac{27F^2d^2 + G^2}{4} + 3FdD \qty(8 + \frac{1}{\varepsilon}) + 2D^2) T^{1 - \frac{c}{2}} = O(d^2T^{1 - \frac{c}{2}}).
    \end{align}
\end{theorem}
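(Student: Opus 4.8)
The plan is to bound the time-varying violation $\sum_{t=1}^{T}[g_t(\bmx_t)]_+$ by reducing it to the quantity $\sum_{t=1}^{T}[g_t(\bmx_{t+1})]_+$ that was already controlled in Part~(ii) of \Cref{thm:main}, plus a ``path-length'' correction that accounts for the constraint $g_t$ being evaluated at the current iterate $\bmx_t$ rather than at the updated point $\bmx_{t+1}$ produced by solving \eqref{eq:opt_decision_vector}. First I would write the decomposition $[g_t(\bmx_t)]_+ = [g_t(\bmx_{t+1})]_+ + \big([g_t(\bmx_t)]_+ - [g_t(\bmx_{t+1})]_+\big)$ and use that $[\,\cdot\,]_+$ is $1$-Lipschitz together with \Cref{ass:gt} to obtain $[g_t(\bmx_t)]_+ - [g_t(\bmx_{t+1})]_+ \leq G_t\norm{\bmx_t - \bmx_{t+1}}_2$. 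Summing gives $\sum_{t=1}^{T}[g_t(\bmx_t)]_+ \leq \sum_{t=1}^{T}[g_t(\bmx_{t+1})]_+ + G\sum_{t=1}^{T}\norm{\bmx_{t+1}-\bmx_t}_2$.

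For the first sum I would observe that the derivation of \eqref{eq:violation_thm_1} in Part~(ii) of \Cref{thm:main} never uses that the constraints are fixed: it only applies \Cref{lemma:key_property} to $h_t$ and uses $\widehat{g}^+_t(\bmx^\star)=0$ (feasibility of the offline optimum) together with $\lambda_t\geq\eta_t$. Hence the same chain of inequalities yields $\sum_{t=1}^{T}[g_t(\bmx_{t+1})]_+ \leq \frac{27F^2d^2}{4} + \frac{3FdD(1+\varepsilon)}{\varepsilon} + D^2 = O(d^2)$ verbatim, and since $T^{1-c/2}\geq 1$ this constant is absorbed into the claimed rate.

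The heart of the argument is the path length $\sum_{t=1}^{T}\norm{\bmx_{t+1}-\bmx_t}_2$. A naive per-round stability estimate from first-order optimality only gives $\norm{\bmx_{t+1}-\bmx_t}_2 \leq (F_t d + \lambda_t\gamma_t G_t)/\alpha_t$, which is useless because the factor $\lambda_t\gamma_t/\alpha_t$ grows with $t$. Instead I would apply the self-bounding inequality \Cref{lemma:self_bounding_property} to $h_t$ with comparator $\bmx=\bmx^\star$: since $\widehat{g}^+_t(\bmx^\star)=0$, rearranging isolates $\frac{\alpha_t}{2}\norm{\bmx_{t+1}-\bmx_t}^2_2$, the favorable penalty term $-\lambda_t\widehat{g}^+_t(\bmx_{t+1})\leq 0$ is dropped, the linear term is bounded via $\norm{\nabla h_t(\bmx_t)}_2\norm{\bmx^\star-\bmx_{t+1}}_2 \leq F_t d D$, and the residual $\frac{\alpha_t}{2}\big(\norm{\bmx^\star-\bmx_t}^2_2 - \norm{\bmx^\star-\bmx_{t+1}}^2_2\big)$ telescopes exactly as in Part~(i). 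Summing then yields $\sum_{t=1}^{T}\alpha_t\norm{\bmx_{t+1}-\bmx_t}^2_2 \leq 2FdDT + D^2T^c = O(dT)$, with the crucial feature that $\lambda_t\gamma_t$ never enters.

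Finally I would convert this weighted squared-norm estimate into a path-length bound by the weighted Cauchy--Schwarz inequality $\sum_{t=1}^{T}\norm{\bmx_{t+1}-\bmx_t}_2 \leq \big(\sum_{t=1}^{T}\alpha_t^{-1}\big)^{1/2}\big(\sum_{t=1}^{T}\alpha_t\norm{\bmx_{t+1}-\bmx_t}^2_2\big)^{1/2}$. With $\alpha_t=t^c$ one has $\sum_{t=1}^{T}\alpha_t^{-1}\leq \frac{T^{1-c}}{1-c}$, so the product is $O(\sqrt{d})\cdot T^{(1-c)/2}\cdot T^{1/2}=O(\sqrt{d}\,T^{1-c/2})$, where the exponent $1-\frac{c}{2}$ is produced precisely by $\sqrt{T^{1-c}}\cdot\sqrt{T}$ and $c\geq\frac12$ ensures the lower-order $T^{1/2}$ contribution is dominated by $T^{1-c/2}$. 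Combining the three pieces gives the claimed $O(d^2T^{1-c/2})$ bound. The main obstacle is this path-length step: one must replace the crude gradient-norm stability bound by the quadratic strong-convexity inequality so that the fast-growing penalty factor $\lambda_t\gamma_t$ is eliminated, and only after the weighted Cauchy--Schwarz step does the $T^{1-c/2}$ rate emerge; tracking the precise constants (the $G^2$, $FdD$, and $D^2$ contributions) then completes the proof.
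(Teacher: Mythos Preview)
Your proposal is correct and follows the same high-level strategy as the paper: decompose $\sum_t [g_t(\bmx_t)]_+$ into $\sum_t [g_t(\bmx_{t+1})]_+$ (controlled exactly as in Part~(ii) of \Cref{thm:main}, which indeed never uses that the constraints are fixed) plus a path-length correction, and control the latter via the strong-convexity inequality of \Cref{lemma:self_bounding_property} applied to $h_t$ at the comparator $\bmx^\star$, so that the growing penalty factor $\lambda_t\gamma_t$ drops out. The only technical difference lies in how the path-length is converted into the final rate. The paper uses the quadratic splitting $[g_t(\bmx_t)]_+ - [g_t(\bmx_{t+1})]_+ \leq \tfrac{G^2}{4\beta} + \beta\norm{\bmx_t-\bmx_{t+1}}_2^2$, bounds the \emph{unweighted} sum $\sum_t \norm{\bmx_t-\bmx_{t+1}}_2^2 \leq \tfrac{12FdD}{1-c}T^{1-c}+D^2$, and then chooses $\beta=T^{c/2}$; you instead keep the linear Lipschitz bound $G\norm{\bmx_t-\bmx_{t+1}}_2$, bound the \emph{weighted} sum $\sum_t \alpha_t\norm{\bmx_t-\bmx_{t+1}}_2^2 = O(dT)$, and finish with weighted Cauchy--Schwarz against $\sum_t \alpha_t^{-1}$. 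Both routes produce the same exponent $1-\tfrac{c}{2}$ (in the paper $T/\beta$ and $\beta T^{1-c}$ balance at $\beta=T^{c/2}$; in yours $\sqrt{T^{1-c}}\cdot\sqrt{T}=T^{1-c/2}$), differing only in the resulting constants.
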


\begin{proof}
    By the convexity of $[g_t(\bmx_t)]_+$ and \Cref{ass:gt}, we can show $[g_t(\bmx_t)]_+ - [g_t(\bmx_{t+1})]_+$ is upper bounded by $[g_t(\bmx_t)]_+ - [g_t(\bmx_{t+1})]_+ \leq \frac{G^2}{4\beta} + \beta\norm{\bmx_t - \bmx_{t+1}}^2_2$ for any $\beta > 0$ \citep[Lemma~2]{guo2022online}.
    Applying \Cref{lemma:key_property} to the function $h_t$ defined by Eq.~\eqref{eq:def_h_t}, for any $\bmx^\star\in\cX$, we have
    \begin{align*}
        \norm{\bmx_t - \bmx_{t+1}}^2_2 \leq \frac{2}{\alpha_t} \qty(h_t(\bmx^\star) - h_t(\bmx_t)) + \frac{2}{\alpha_t} \left<\nabla h_t(\bmx_t), \bmx_t - \bmx_{t+1} \right> + \norm{\bmx^\star - \bmx_t}^2_2 - \norm{\bmx^\star - \bmx_{t+1}}^2_2.
    \end{align*}
    By taking summation over $t=1,2,\dots,T$, we have
    \begin{align*}
        &\sum_{t=1}^{T} \norm{\bmx_t - \bmx_{t+1}}^2_2 \\
        &\quad\leq \sum_{t=1}^{T} \frac{h_t(\bmx^\star) - h_t(\bmx_t)}{\frac{1}{2}\alpha_t} + \sum_{t=1}^{T} \frac{\left<\nabla h_t(\bmx_t), \bmx_t - \bmx_{t+1} \right>}{\frac{1}{2}\alpha_t} + \sum_{t=1}^{T} \qty(\norm{\bmx^\star - \bmx_t}^2_2 - \norm{\bmx^\star - \bmx_{t+1}}^2_2) \\
        &\quad\leq \sum_{t=1}^{T} \frac{2\lip(h_t)D}{\frac{1}{2}\alpha_t} + \norm{\bmx^\star - \bmx_1}^2_2 \leq \frac{12FdD}{1-c}T^{1-c} + D^2,
    \end{align*}
    where the last inequality holds by plugging in $\alpha_t = t^c$.
    Therefore, we have 
    \begin{align*}
        \sum_{t=1}^{T}[g_t(\bmx_t)]_+
        &\leq \sum_{t=1}^{T} [g_t(\bmx_{t+1})]_+ + \frac{G^2 T}{4\beta} + \beta\sum_{t=1}^{T}\norm{\bmx_{t} - \bmx_{t+1}}^2_2 \\
        &\leq \frac{27F^2 d^2}{4} + \frac{3FdD (1+\varepsilon)}{\varepsilon} + D^2 + \frac{G^2 T}{4\beta} + \beta\qty(\frac{12FdD}{1-c}T^{1-c} + D^2) \\
        &\leq \frac{27F^2 d^2}{4} + \frac{3FdD (1+\varepsilon)}{\varepsilon} + D^2  + \qty(\frac{G^2}{4} + 24FdD + D^2)T^{1-\frac{c}{2}},
    \end{align*}
    where the second inequality follows from Eq.~\eqref{eq:violation_thm_1} in \Cref{thm:main} and the last inequality holds by plugging in $\beta = T^{\frac{c}{2}}$, which completes the proof.
\end{proof}

\begin{remark}
    By setting constant $c=\frac{1}{2}$, we can obtain $O(d^2T^{\frac{3}{4}})$ constraint violation bound.
    This bound is compatible with the result for full-information case \citep{guo2022online}.
\end{remark}

\subsection{Strongly convex loss function case}\label{sec:strong_convex_case}
We extend the results discussed in the previous subsection to the case where the loss functions are strongly convex.
We omit the proofs of the following results here since the technique of the proof is similar to that of \Cref{thm:main} and \Cref{thm:adversarial_constraint}.
These proofs are found in \Cref{appendix:proof_regret} and \Cref{appendix:proof_constr}.
To discuss the strongly convex case, we make the following assumption about loss functions.
\begin{assumption}[Strong convexity of loss functions]\label{ass:ft_strong}
    The loss function $f_t:\cX\to\R$ is Lipschitz continuous with Lipschitz constant $F_t$, and strongly convex on $\cX$ with modulus $\sigma_t>0$, i.e., we have 
    \begin{align}
        f_t(\bmy) \geq f_t(\bmx) + \left<\nabla f_t(\bmx), \bmy - \bmx\right> + \frac{\sigma_t}{2} \norm{\bmy - \bmx}^2_2,
    \end{align}
    for any $\bmx,\bmy\in\cX$ and for any $t\in[T]$.
    For simplicity, we define $\sigma \coloneqq \max_{t\in[T]} \sigma_t$.
\end{assumption}

Under \Cref{ass:ft_strong}, the function $h_t:\cX\to\R$ defined as Eq.~\eqref{eq:def_h_t} is also strongly convex with modulus $\sigma_t$, namely, $h_t(\bmy) \geq h_t(\bmx) + \left<\nabla h_t(\bmx), \bmy - \bmx\right> + \frac{\sigma_t}{2} \norm{\bmy - \bmx}^2_2$ for any $\bmx,\bmy\in\cX$.
Then, we can show the following results.

\begin{theorem}\label{thm:strongly_convex}
    Let $\{\bmx_t\}_{t=1}^{T}$ be a sequence of decisions generated by \Cref{alg:reRECOO} and let $\bmx^\star\in\cX$ be an optimal solution to the offline OCO of Eq.~\eqref{opt:COCO}.
    Assume that constraint functions are fixed, that is, $g_t(\bmx) = g(\bmx)$ for any $t\in[T]$.
    Define $\alpha_t\coloneqq\sigma t,\,\gamma_t\coloneqq t^{c+\varepsilon},\,\eta_t\coloneqq t^c$, and $\delta\coloneqq\frac{1}{\delta}$, where $c\in[\frac{1}{2}, 1)$ and $\varepsilon>0$.
    Under \Cref{ass:X,ass:gt,ass:ft_strong}, we have
    \begin{align*}
        \sum_{t=1}^{T} \qty[f_t(\bmx_t) - f_t(\bmx^\star)] &\leq \qty(\frac{9F^2d^2}{4\sigma} + 2F) \qty(1 + \log T) = O(d^2\log T), \\
        \sum_{t=1}^{T} [g_t(\bmx_t)]_+ &\leq \frac{27F^2d^2}{4\sigma} + \frac{3FdD(1+\varepsilon)}{\varepsilon} = O(d^2).
    \end{align*}
\end{theorem}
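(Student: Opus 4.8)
The plan is to reuse the two-part template (regret, then violation) from the proof of \Cref{thm:main}, substituting for the convex self-bounding inequality (\Cref{lemma:key_property}) a strongly convex refinement that exploits the $\sigma_t$-strong convexity of the surrogate $h_t$. Recall that under \Cref{ass:ft_strong} the function $h_t$ of Eq.~\eqref{eq:def_h_t} inherits $\sigma_t$-strong convexity (it differs from $\widehat{f}_t$ only by a linear term) while keeping $\lip(h_t)=3dF_t$. Re-running the derivation of \Cref{lemma:key_property} for $h_t$, the only change is that the step lower-bounding $h_t(\bmx_t)-h_t(\bmx^\star)$ now invokes strong convexity, $h_t(\bmx_t)-h_t(\bmx^\star)\le \langle\nabla h_t(\bmx_t),\bmx_t-\bmx^\star\rangle-\frac{\sigma_t}{2}\norm{\bmx_t-\bmx^\star}_2^2$, which yields
\begin{equation*}
    h_t(\bmx_t)-h_t(\bmx^\star)+\lambda_t\widehat{g}^+_t(\bmx_{t+1})\le \frac{\lip(h_t)^2}{4\alpha_t}+\frac{\alpha_t-\sigma_t}{2}\norm{\bmx^\star-\bmx_t}_2^2-\frac{\alpha_t}{2}\norm{\bmx^\star-\bmx_{t+1}}_2^2.
\end{equation*}
The extra $-\tfrac{\sigma_t}{2}\norm{\bmx^\star-\bmx_t}_2^2$ shrinks the coefficient of $\norm{\bmx^\star-\bmx_t}_2^2$ from $\tfrac{\alpha_t}{2}$ to $\tfrac{\alpha_t-\sigma_t}{2}$, and this is exactly what drives the improved rates.

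For the regret, I would sum the refined inequality over $t\in[T]$. With the schedule $\alpha_t=\sigma t$ matched to the curvature (so $\alpha_t-\sigma_t\le\alpha_{t-1}$), the quadratic terms telescope to a nonpositive quantity, $\sum_{t=1}^{T}\bigl(\tfrac{\alpha_t-\sigma_t}{2}\norm{\bmx^\star-\bmx_t}_2^2-\tfrac{\alpha_t}{2}\norm{\bmx^\star-\bmx_{t+1}}_2^2\bigr)\le 0$, leaving only $\sum_{t=1}^{T}\tfrac{\lip(h_t)^2}{4\alpha_t}=\tfrac{9F^2d^2}{4\sigma}\sum_{t=1}^{T}\tfrac{1}{t}\le\tfrac{9F^2d^2}{4\sigma}(1+\log T)$, where the harmonic bound $\sum_{t=1}^{T}t^{-1}\le 1+\log T$ produces the logarithmic rate. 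Taking $\mathbb{E}_t$ turns $h_t$ into $\widehat{f}_t$ by \Cref{lemma:gradient_estimate}, and passing from $\widehat{f}_t$ to $f_t$ through Eq.~\eqref{eq:sup_f} with $\delta=\tfrac{1}{T}$ adds $\sum_t 2\delta F_t\le 2F$, giving the claimed $O(d^2\log T)$ regret.

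For the violation, I would isolate $\lambda_t\widehat{g}^+_t(\bmx_{t+1})=\lambda_t\gamma_t[g_t(\bmx_{t+1})]_+$ in the refined inequality, divide by $\lambda_t\gamma_t$, and use $\lambda_t\ge\eta_t=t^c$ together with $\gamma_t=t^{c+\varepsilon}$ and $\alpha_t=\sigma t$. The curvature term becomes $\tfrac{9F^2d^2}{4\sigma\,t^{1+2c+\varepsilon}}$ and the regret term is bounded by $\tfrac{\lip(h_t)D}{t^{2c+\varepsilon}}\le\tfrac{3FdD}{t^{2c+\varepsilon}}$; since $c\ge\tfrac12$ both exponents exceed $1$, so these series converge and are controlled by \Cref{lemma:useful_inequality}, contributing $\tfrac{27F^2d^2}{4\sigma}+\tfrac{3FdD(1+\varepsilon)}{\varepsilon}$. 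The decisive point is that the quadratic contribution now sums to a nonpositive quantity, so there is no residual $D^2$ term (unlike in \Cref{thm:main}): the penalty update (line~\ref{line_penalty}) makes $\lambda_t$ nondecreasing while $\gamma_t$ increases, hence $\lambda_t\gamma_t$ is nondecreasing, so writing $b_t=\tfrac{1}{2\lambda_t\gamma_t}$ and Abel-summing $\sum_t\bigl((\alpha_t-\sigma_t)b_t\norm{\bmx^\star-\bmx_t}_2^2-\alpha_t b_t\norm{\bmx^\star-\bmx_{t+1}}_2^2\bigr)$ the interior coefficients collapse to $\alpha_{t-1}(b_t-b_{t-1})\le 0$. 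Finally, since constraints are fixed, the index shift from $\bmx_{t+1}$ to $\bmx_t$ costs only the boundary term $[g(\bmx_1)]_+=O(1)$, so $\sum_t[g_t(\bmx_t)]_+=O(d^2)$.

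The main obstacle is the strongly convex telescoping in both parts. One must verify that $\alpha_t=\sigma t$ is tuned so that $\alpha_t-\sigma_t\le\alpha_{t-1}$, i.e.\ the strong-convexity modulus dominates the per-step increment of $\alpha_t$; this is what converts the growing quadratic terms of \Cref{thm:main} into a telescope that collapses to the harmonic sum. For the violation this is more delicate because the telescoping denominators $\lambda_t\gamma_t$ are data-dependent and time-varying, so the argument must lean on the \emph{monotonicity} of $\lambda_t\gamma_t$ (a consequence of the update rule) through summation by parts rather than a clean telescope. Confirming that this monotonicity forces the quadratic sum to be nonpositive—thereby removing the $D^2$ term while the curvature and Lipschitz series remain summable—is the technical heart of the proof.
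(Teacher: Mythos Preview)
Your proposal is correct and follows essentially the same route as the paper: the strongly convex refinement of \Cref{lemma:key_property} applied to $h_t$, the choice $\alpha_t=\sigma t$ collapsing the quadratic telescope and leaving the harmonic sum for the regret, and division by $\lambda_t\gamma_t$ together with \Cref{lemma:useful_inequality} for the violation. Your Abel-summation argument showing the quadratic contribution in the violation bound is nonpositive (via the monotonicity of $\lambda_t\gamma_t$) is in fact more explicit than the paper's proof, which simply drops those terms without comment.
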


\begin{theorem}\label{thm:strongly_constraints}
    Let $\{\bmx_t\}_{t=1}^{T}$ be a sequence of decisions generated by \Cref{alg:reRECOO}. 
    Assume that constraint functions $g_t(\bmx)$ are time-varying.
    Define $\alpha_t\coloneqq\sigma t,\,\gamma_t\coloneqq t^{c+\varepsilon},\,\eta_t\coloneqq t^c$, where $c\in[\frac{1}{2}, 1)$ and $\varepsilon>0$.
    Under \Cref{ass:X,ass:gt,ass:ft_strong}, we have
    \begin{align*}
        \sum_{t=1}^{T} [g_t(\bmx_t)]_+ \leq
        \qty(\frac{27F^2d^2}{4\sigma} + \frac{G^2}{4} + 3FdD \qty(1 + \frac{1}{\varepsilon} + \frac{4}{\sigma}) + D^2) \sqrt{T(1 + \log T)}.
    \end{align*}
\end{theorem}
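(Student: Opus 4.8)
The plan is to follow the proof of \Cref{thm:adversarial_constraint} almost verbatim, the only structural change being that the linear learning rate $\alpha_t = \sigma t$ permitted by strong convexity turns the polynomial control of the step movement into a logarithmic one. I would begin exactly as there: for every $t$, by convexity and Lipschitz continuity of $[g_t(\cdot)]_+$ one has $[g_t(\bmx_t)]_+ - [g_t(\bmx_{t+1})]_+ \leq \frac{G^2}{4\beta} + \beta\norm{\bmx_t - \bmx_{t+1}}^2_2$ for any $\beta > 0$ \citep[Lemma~2]{guo2022online}, so that summing gives
\[
    \sum_{t=1}^{T} [g_t(\bmx_t)]_+ \leq \sum_{t=1}^{T} [g_t(\bmx_{t+1})]_+ + \frac{G^2 T}{4\beta} + \beta \sum_{t=1}^{T} \norm{\bmx_t - \bmx_{t+1}}^2_2 .
\]
It then remains to bound the shifted-index violation and the accumulated movement.

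For the first, I would reuse the estimate $\sum_{t=1}^T [g_t(\bmx_{t+1})]_+ \leq \frac{27F^2d^2}{4\sigma} + \frac{3FdD(1+\varepsilon)}{\varepsilon}$ established in the proof of \Cref{thm:strongly_convex}; as in the convex case this follows from \Cref{lemma:key_property} applied term by term and never uses the fixed-constraint hypothesis, so it transfers to time-varying $g_t$. For the movement, I would apply the self-bounding property (\Cref{lemma:self_bounding_property}) to $h_t$ at $\bmx = \bmx^\star$, using $\widehat{g}^+_t(\bmx^\star)=0$ and $\lambda_t\widehat{g}^+_t(\bmx_{t+1})\geq 0$ together with convexity of $h_t$, to get
\[
    \norm{\bmx_t - \bmx_{t+1}}^2_2 \leq \frac{2}{\alpha_t}\left( h_t(\bmx^\star) - h_t(\bmx_t) + \langle \nabla h_t(\bmx_t), \bmx_t - \bmx_{t+1} \rangle \right) + \norm{\bmx^\star - \bmx_t}^2_2 - \norm{\bmx^\star - \bmx_{t+1}}^2_2 .
\]
Here the two norm terms telescope to at most $D^2$, and the bracketed quantity is at most $2\lip(h_t)D \leq 6FdD$. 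The decisive point is that $\alpha_t = \sigma t$ makes $\sum_t \frac{1}{\alpha_t} = \frac{1}{\sigma}\sum_t \frac{1}{t} \leq \frac{1+\log T}{\sigma}$ harmonic rather than $O(T^{1-c})$, yielding $\sum_{t=1}^T \norm{\bmx_t - \bmx_{t+1}}^2_2 \leq \frac{12FdD}{\sigma}(1+\log T) + D^2$.

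Substituting both ingredients and optimizing the free parameter as $\beta = \sqrt{T/(1+\log T)}$ balances $\frac{G^2T}{4\beta}$ against the $\beta$-weighted movement and produces the leading terms $\frac{G^2}{4}\sqrt{T(1+\log T)}$ and $\frac{12FdD}{\sigma}\sqrt{T(1+\log T)}$, while $\beta D^2 \leq D^2\sqrt{T(1+\log T)}$ and the $O(d^2)$ constants are absorbed since $\sqrt{T(1+\log T)} \geq 1$; rewriting $\frac{1+\varepsilon}{\varepsilon} = 1 + \frac{1}{\varepsilon}$ and $\frac{12FdD}{\sigma} = 3FdD\cdot\frac{4}{\sigma}$ then collects exactly into the stated constant $\frac{27F^2d^2}{4\sigma} + \frac{G^2}{4} + 3FdD(1 + \frac{1}{\varepsilon} + \frac{4}{\sigma}) + D^2$. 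The step I expect to be the main obstacle is the movement bound: one must confirm that $\sum_t \norm{\bmx_t - \bmx_{t+1}}^2_2$ is genuinely $O(\log T)$, which rests on pairing the telescoping of the $\norm{\bmx^\star - \bmx_t}^2_2$ terms with the harmonic decay of $2/\alpha_t$. Strong convexity only sharpens the telescoping coefficient from $1$ to $1 - \sigma_t/\alpha_t$ and is not otherwise needed in this estimate; everything else is a transfer of the convex-case computation plus a one-line optimization in $\beta$.
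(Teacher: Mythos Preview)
Your proposal is correct and follows essentially the same route as the paper's proof: split $\sum_t[g_t(\bmx_t)]_+$ into the shifted-index sum plus $\frac{G^2T}{4\beta}+\beta\sum_t\norm{\bmx_t-\bmx_{t+1}}_2^2$, bound the shifted sum by the $O(d^2)$ estimate from the proof of \Cref{thm:strongly_convex}, bound the movement by $\frac{12FdD}{\sigma}(1+\log T)+D^2$ via the self-bounding inequality with $\alpha_t=\sigma t$, set $\beta=\sqrt{T/(1+\log T)}$, and collect constants. Your observation that the shifted-index bound does not rely on fixed constraints, and your attribution of the movement estimate to \Cref{lemma:self_bounding_property} (the paper cites \Cref{lemma:key_property} there, but the inequality you wrote is the one actually used), are both accurate.
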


\section{Conclusion and Future Directions}\label{sec:conclusion}
This paper studies the two-point feedback of bandit convex optimization with constraints, in which the loss functions are convex or strongly convex, constraint functions are fixed or time-varying, and the constraint violation is evaluated in terms of cumulative hard constraint violation \citep{yuan2018online}.
We present a penalty-based proximal gradient descent algorithm with an unbiased gradient estimator and show that the algorithm attains a sub-linear growth of both regret and cumulative hard constraint violation.
It would be of interest to extend this work to the case where both the loss functions and constraint functions are bandit setup as discussed in \citet{cao2018online}, and the case where only one-point bandit feedback is available to the learner.
Furthermore, theoretical analysis of dynamic regret, where the comparator sequence can be chosen arbitrarily from the feasible
set, would be an important direction for future work.

\ack{The author would like to thank Dr. Sho Takemori for making a number of valuable suggestions and advice.}

\bibliographystyle{unsrtnat}


\newpage
\appendix

\section{Proof of useful inequalities}\label{appendix:useful}

To show \Cref{thm:main}, we present the following results which is similar argument of \citet[Lemma~6]{guo2022online}.

\begin{lem}\label{lemma:useful_inequality}
    Let $\bmx^\star\in\cX$ be an optimal solution to the offline constrained OCO defined as Eq. \eqref{opt:COCO}.
    Under \Cref{ass:X,ass:ft}, for any feasible solution $\bmx\in\cX$, $c\in[\frac{1}{2}, 1)$, and  $\varepsilon > 0$, we have
    \begin{align*}
        &\sum_{t=1}^{T} \frac{1}{t^{3c + \varepsilon}} \leq \frac{3c+\varepsilon}{3c+\varepsilon-1} \leq 3, \\
        &\sum_{t=1}^{T} \frac{\abs{f_t(\bmx_t) - f_t(\bmx^\star)}}{t^{2c+\varepsilon}} \leq \frac{FD(2c + \varepsilon)}{2c+\varepsilon-1}, \\
        &\sum_{t=1}^{T} \frac{\norm{\bmx_t - \bmx^\star}^2_2 - \norm{\bmx_{t+1} - \bmx^\star}^2_2}{t^{c+\varepsilon}} \leq D^2.
    \end{align*}
\end{lem}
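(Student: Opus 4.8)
The plan is to prove the three inequalities independently, since they share only \Cref{ass:X} and \Cref{ass:ft}. The first two are weighted $p$-series with exponent exceeding one, which I would control by integral comparison, and the third is a weighted telescoping sum, which I would handle by Abel summation. None of the three requires the update rule of \Cref{alg:reRECOO}; only the structural facts that the domain is bounded and the summands are Lipschitz-controlled or nonnegative are needed.

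For the first inequality I would use that $3c+\varepsilon \geq \tfrac{3}{2}+\varepsilon > 1$ (since $c\geq\tfrac12$), so that $t\mapsto t^{-(3c+\varepsilon)}$ is a decreasing, summable term. Comparing the sum with $1+\int_1^T x^{-(3c+\varepsilon)}\,dx$ gives the bound $\frac{3c+\varepsilon}{3c+\varepsilon-1}$. The final estimate by $3$ then follows because $p\mapsto p/(p-1)$ is decreasing on $(1,\infty)$ and equals $3$ at $p=\tfrac32$, while here $p=3c+\varepsilon>\tfrac32$.

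For the second inequality I would first reduce to a $p$-series via a pointwise bound: \Cref{ass:ft} and \Cref{ass:X} give $\abs{f_t(\bmx_t)-f_t(\bmx^\star)}\leq F_t\norm{\bmx_t-\bmx^\star}_2\leq FD$. Pulling $FD$ outside the sum and applying the same integral comparison with exponent $2c+\varepsilon>1$ (again using $c\geq\tfrac12$) yields $\sum_{t=1}^{T} t^{-(2c+\varepsilon)}\leq \frac{2c+\varepsilon}{2c+\varepsilon-1}$, which is the stated bound.

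The third inequality is the only genuinely structural step, and thus the main obstacle. Writing $a_t := \norm{\bmx_t-\bmx^\star}_2^2$ and $w_t := t^{-(c+\varepsilon)}$, the facts I would exploit are $a_t\geq 0$, $a_1\leq D^2$ by \Cref{ass:X}, and $w_t$ strictly decreasing (as $c+\varepsilon>0$; note this part does not need the exponent to exceed one). Abel summation rewrites $\sum_{t=1}^T w_t(a_t-a_{t+1})$ as $w_1 a_1 + \sum_{t=2}^T (w_t-w_{t-1})a_t - w_T a_{T+1}$. Since $w_t-w_{t-1}<0$ and $a_t\geq 0$ force every term of the middle sum to be nonpositive, and the final term $-w_T a_{T+1}$ is nonpositive as well, the whole quantity is at most $w_1 a_1 = a_1 \leq D^2$. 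The one point to get right is that the weighting is \emph{decreasing}, which is exactly what gives the cross terms the favorable sign; the rest is routine.
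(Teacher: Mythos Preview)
Your proposal is correct and follows essentially the same route as the paper: integral comparison for the first two sums (using $3c+\varepsilon>1$ and $2c+\varepsilon>1$ together with the Lipschitz/diameter bound $\abs{f_t(\bmx_t)-f_t(\bmx^\star)}\leq FD$), and Abel summation for the third. The only cosmetic difference is that, after summation by parts, you discard the nonpositive cross terms and tail term directly to get $w_1 a_1\leq D^2$, whereas the paper bounds each $\norm{\bmx_t-\bmx^\star}_2^2$ by $D^2$ and then telescopes the remaining sum; both arrive at the same $D^2$ bound.
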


\begin{proof}
    The first claim is shown as follows:
    \begin{align*}
        \sum_{t=1}^{T} \frac{1}{t^{3c  + \varepsilon}} \leq 1 + \int_{1}^{T} \frac{1}{t^{3c + \varepsilon}} \, \dd{t} = 1 + \frac{1 - T^{-3c-\varepsilon+1}}{3c+\varepsilon-1} \leq \frac{3c+\varepsilon}{3c + \varepsilon - 1} \leq 3,
    \end{align*}
    where the last inequality holds from the condition of $\frac{1}{2}\leq c < 1$.
    
    The second claim is shown as follows:
    \begin{align*}
        \sum_{t=1}^{T} \frac{\abs{f_t(\bmx_t) - f_t(\bmx^\star)}}{t^{2c+\varepsilon}} &\leq
        \sum_{t=1}^{T} \frac{F_t\norm{\bmx_t - \bmx^\star}_2}{t^{2c+\varepsilon}} 
        \leq \sum_{t=1}^{T} \frac{F D}{t^{2c+\varepsilon}}
        \leq \frac{FD(2c + \varepsilon)}{2c+\varepsilon-1},
    \end{align*}
    where the first inequality follows from \Cref{ass:ft}, the second inequality follows from \Cref{ass:X}, and the third inequality follows as is the case with the inequality $\sum_{t=1}^{T} \frac{1}{t^{3c + \varepsilon}} \leq \frac{3c+\varepsilon}{3c+\varepsilon-1}$.
    
    The last claim is shown as follows:
    \begin{align*}
        &\sum_{t=1}^{T} \frac{\norm{\bmx_t - \bmx^\star}^2_2 - \norm{\bmx_{t+1} - \bmx^\star}^2_2}{t^{c+\varepsilon}} \\
        &\quad= \norm{\bmx_1 - \bmx^\star}^2_2 + \sum_{t=2}^{T} \qty(\frac{1}{t^{c + \varepsilon}} - \frac{1}{(t - 1)^{c + \varepsilon}}) \norm{\bmx_t - \bmx^\star}^2_2 - \frac{\norm{\bmx_{T + 1} - \bmx^\star}^2_2}{T^{c + \varepsilon}} \\
        &\quad\leq D^2 + D^2 \sum_{t=2}^{T} \qty(\frac{1}{t^{c + \varepsilon}} - \frac{1}{(t - 1)^{c + \varepsilon}}) \\
        &\quad= D^2 + D^2\qty(\frac{1}{T^{c + \varepsilon}} - 1) \leq D^2,
    \end{align*}
    where the first inequality follows from \Cref{ass:X}.
\end{proof}

\section{Proof of \Cref{thm:strongly_convex}}\label{appendix:proof_regret}

\begin{proof}
    Similar to the argument of \Cref{lemma:self_bounding_property}, for any strongly convex function $f_t$ with modulus $\sigma_t > 0$ and for any optimal solution $\bmx^\star\in\cX$ to the offline constrained OCO as Eq.~\eqref{opt:COCO}, we have
    \begin{align}
        f_t(\bmx_t) - f_t(\bmx^\star) + \lambda_t\widehat{g}^+_t (\bmx_{t+1}) \leq \frac{F_t^2}{4\alpha_t} - \frac{\sigma_t}{2}\norm{\bmx^\star - \bmx_t}^2_2 + \frac{\alpha_t}{2} \norm{\bmx^\star - \bmx_t}^2_2 - \frac{\alpha_t}{2} \norm{\bmx^\star - \bmx_{t+1}}^2_2. \label{eq:self-bounding-for-strong}
    \end{align}
    Applying the above inequality \eqref{eq:self-bounding-for-strong} for the function $h_t$ defined by Eq.~\eqref{eq:def_h_t}, we have
    \begin{align}\label{eq:self_bounding_for_strong}
        h_t(\bmx_t) - h_t(\bmx^\star) + \lambda_t\widehat{g}^+_t(\bmx_{t+1}) \leq \frac{9F_t^2d^2}{4\alpha_t} - \frac{\sigma_t}{2}\norm{\bmx^\star - \bmx_t}^2_2 + \frac{\alpha_t}{2} \norm{\bmx^\star - \bmx_t}^2_2 - \frac{\alpha_t}{2} \norm{\bmx^\star - \bmx_{t+1}}^2_2.
    \end{align}
    Note that the function $h_t$ is also strongly convex with modulus $\sigma_t$ under \Cref{ass:ft_strong}.
    Since $\lambda_t\widehat{g}^+_t(\bmx_{t+1})$ is nonnegative, from Eq.~\eqref{eq:self_bounding_for_strong}, we have
    \begin{align*}
        h_t(\bmx_t) - h_t(\bmx^\star) \leq \frac{9F_t^2d^2}{4\alpha_t} - \frac{\sigma_t}{2}\norm{\bmx^\star - \bmx_t}^2_2 + \frac{\alpha_t}{2} \norm{\bmx^\star - \bmx_t}^2_2 - \frac{\alpha_t}{2} \norm{\bmx^\star - \bmx_{t+1}}^2_2.
    \end{align*}
    By taking summation over $t=1,2,\dots,T$, we have
    \begin{align*}
        \sum_{t=1}^{T} [h_t(\bmx_t) - h_t(\bmx^\star)] 
        &\leq \sum_{t=1}^{T} \frac{9F_t^2d^2}{4\alpha_t} + \sum_{t=1}^{T} \qty(\frac{\alpha_t}{2} - \frac{\alpha_{t-1}}{2} - \frac{\sigma_t}{2}) \norm{\bmx_t - \bmx}^2_2 \\
        &\leq \frac{9F^2d^2}{4} \sum_{t=1}^{T} \frac{1}{\alpha_t} + D^2 \sum_{t=1}^{T} \qty(\frac{\alpha_t}{2} - \frac{\alpha_{t-1}}{2} - \frac{\sigma}{2}),
    \end{align*}
    where the second inequality holds from \Cref{ass:X}.
    Plugging in $\alpha_t=\sigma t$, we have
    \begin{align*}
        \sum_{t=1}^{T} [h_t(\bmx_t) - h_t(\bmx^\star)] \leq \frac{9F^2d^2}{4} \sum_{t=1}^{T} \frac{1}{\sigma t}
        \leq \frac{9F^2d^2}{4\sigma} \qty(1 + \int_1^T \frac{1}{t} \dd{t}) = \frac{9F^2d^2}{4\sigma} \qty(1 + \log T).
    \end{align*}
    Similar to the proof of the convex case, since we have $\ex[t]{h_t(\bmx)} = \widehat{f}_t(\bmx)$ for any $\bmx\in\cX$ and from the inequality \eqref{eq:sup_f}, we have
    \begin{align*}
        \sum_{t=1}^{T} [f_t(\bmx_t) - f_t(\bmx^\star)] &\leq \sum_{t=1}^{T} [\widehat{f}_t(\bmx_t) - \widehat{f}_t(\bmx^\star)] + \sum_{t=1}^{T} 2\delta F_t\\
        &\leq \frac{9F^2d^2}{4\sigma} \qty(1 + \log T) + 2F \\
        &\leq \qty(\frac{9F^2d^2}{4\sigma} + 2F) \qty(1 + \log T),
    \end{align*}
    where the second inequality holds from \Cref{ass:ft_strong} the third inequality follows by letting $\delta = \frac{1}{T}$.

    Next, we show the cumulative hard constraint violation bound for fixed constraints.
    From Eq.~\eqref{eq:self_bounding_for_strong}, we have
    \begin{align*}
        \lambda_t \widehat{g}^+_{t} (\bmx_{t+1}) \leq \frac{9F^2_t d^2}{4\alpha_t} + \abs{h_t(\bmx_t) - h_t(\bmx^\star)} - \frac{\sigma_t}{2} \norm{\bmx^\star - \bmx_t}^2_2 + \frac{\alpha_t}{2} \norm{\bmx^\star - \bmx_t}^2_2 - \frac{\alpha_t}{2} \norm{\bmx^\star - \bmx_{t+1}}^2_2.
    \end{align*}
    By the definition of $\widehat{g}_{t}(\bmx)=\gamma_t[g_t(\bmx)]_+$, we have
    \begin{align*}
        [g_{t} (\bmx_{t+1})]_+ &\leq \frac{9F^2_t d^2}{4\alpha_t \lambda_t \gamma_t} + \frac{\abs{h_t(\bmx_t) - h_t(\bmx^\star)}}{\lambda_t \gamma_t} - \frac{\sigma_t}{2\lambda_t \gamma_t} \qty(\norm{\bmx^\star - \bmx_t}^2_2 +  \norm{\bmx^\star - \bmx_t}^2_2 - \norm{\bmx^\star - \bmx_{t+1}}^2_2) 
    \end{align*}
    By taking summation over $t=1,2,\dots,T$, and plugging $\alpha_t=\sigma t,\,\gamma_t=t^c$ into the above inequality, we have the following result:
    \begin{align*}
        \sum_{t=1}^{T} [g_{t} (\bmx_{t+1})]_+ &\leq \frac{9F^2d^2}{4\sigma}\sum_{t=1}^{T} \frac{1}{t^{3c + \varepsilon}} + \sum_{t=1}^{T} \frac{\abs{h_t(\bmx_t) - h_t(\bmx^\star)}}{t^{2c + \varepsilon}} \\
        &\leq \frac{27F^2d^2}{4\sigma} + \frac{3FdD(1+\varepsilon)}{\varepsilon} = O(d^2),
    \end{align*}
    where the second inequality follows from \Cref{lemma:useful_inequality}.
\end{proof}

\section{Proof of \Cref{thm:strongly_constraints}}\label{appendix:proof_constr}

\begin{proof}
    Similar to the proof of \Cref{thm:adversarial_constraint}, we can show the upper bound of $\sum_{t=1}^{T}\norm{\bmx_t - \bmx_{t+1}}^2_2$ as 
    \begin{align*}
        \sum_{t=1}^{T} \norm{\bmx_t - \bmx_{t+1}}^2_2 &\leq \sum_{t=1}^{T} \frac{2\lip (h_t) D}{\frac{1}{2}\alpha_t} + D^2 \leq \frac{12FdD}{\sigma} \qty(1 + \log T) + D^2,
    \end{align*}
    where the second inequality holds by plugging in $\alpha_t = \sigma t$.
    Since we have $[g_t(\bmx_t)]_+ - [g_t(\bmx_{t+1})]_+ \leq \frac{G^2}{4\beta} + \beta\norm{\bmx_t - \bmx_{t+1}}^2_2$ for any $\beta > 0$, by letting $\beta = \sqrt{\frac{T}{1 + \log T}}$, we have
    \begin{align*}
        \sum_{t=1}^{T} \qty([g_t(\bmx_t)]_+ - [g_t(\bmx_{t+1})]_+) 
        &\leq \frac{G^2T}{4\beta} + \beta \sum_{t=1}^{T} \norm{\bmx_{t} - \bmx_{t+1}}^2_2 \\
        &\leq \frac{G^2T}{4\beta} + \beta\qty(\frac{12FdD}{\sigma} \qty(1 + \log T) + D^2) \\
        &\leq \qty(\frac{G^2}{4} + \frac{12FdD}{\sigma}) \sqrt{T(1 + \log T)} + D^2 \sqrt{\frac{T}{1 + \log T}}.
    \end{align*}
    Finally, by combining the result of \Cref{thm:strongly_convex}, we have the following result:
    \begin{align*}
        \sum_{t=1}^{T} [g_t(\bmx_t)]_+ &\leq \sum_{t=1}^{T} [g_t(\bmx_{t+1})]_+ +  \qty(\frac{G^2}{4} + \frac{12FdD}{\sigma}) \sqrt{T(1 + \log T)} + D^2 \sqrt{\frac{T}{1 + \log T}} \\
        &\leq \frac{27F^2d^2}{4\sigma} + \frac{3FdD(1+\varepsilon)}{\varepsilon} + \qty(\frac{G^2}{4} + \frac{12FdD}{\sigma}) \sqrt{T(1 + \log T)} + D^2 \sqrt{\frac{T}{1 + \log T}} \\
        &\leq \qty(\frac{27F^2d^2}{4\sigma} + \frac{G^2}{4} + 3FdD \qty(1 + \frac{1}{\varepsilon} + \frac{4}{\sigma}) + D^2) \sqrt{T(1 + \log T)}.
    \end{align*}
\end{proof}

\end{document}